\documentclass[a4paper,fleqn]{cas-dc}

\usepackage[authoryear,sort]{natbib}
\usepackage{graphicx}
\usepackage{comment}
\usepackage{amsmath}
\usepackage{colortbl}
\usepackage{siunitx}
\usepackage[table]{xcolor}
\usepackage{fancyhdr}
\usepackage{mathtools}
\usepackage{subcaption}
\AtBeginDocument{}
\ExplSyntaxOn
\cs_set:Npn \__make_fig_caption:nn #1#2
{
  \l_fig_align_tl
  \skip_vertical:N \l_fig_abovecap_skip
  \setbox\cascaptionbox=\hbox{%
     \sffamily\small\textbf{\color{scolor}#1.}~#2}
  \ifdim\the\wd\cascaptionbox<\dim_use:N \l_fig_width_dim\relax
        \parbox{ \l_fig_width_dim }
                {\unskip\ignorespaces\hfil\sffamily\small
       \textbf{\color{scolor}#1.}~#2\hfil\par }
  \else
        \parbox{ \l_fig_width_dim }
                {\rightskip=0pt\unskip\ignorespaces\sffamily
       \small\textbf{\color{scolor}#1.}~#2\par }
  \fi
  \skip_vertical:N \l_fig_belowcap_skip
}
\ExplSyntaxOff

\usepackage{amsthm, bm}
\usepackage{amssymb}
\usepackage{arydshln}
\usepackage{afterpage}
\usepackage{enumitem}
\usepackage{epstopdf}
\usepackage{orcidlink}
\usepackage{hyperref}

\newtheorem{theorem}{Theorem}

\newtheorem{remark}{Remark}
\newtheorem{definition}{Definition}

\newcommand{\rev}[1]{#1}
\newcommand{\revii}[1]{{\color{black} #1}}

\def\tsc#1{\csdef{#1}{\textsc{\lowercase{#1}}\xspace}}
\tsc{WGM}
\tsc{QE}
\tsc{EP}
\tsc{PMS}
\tsc{BEC}
\tsc{DE}
\ExplSyntaxOn
\bool_gset_true:N \g_stm_nologo_bool   % no e-mail/URL thumbnails (text label instead)
\ExplSyntaxOff
\RenewDocumentCommand \printorcid { }
  {
  }
\ExplSyntaxOn
\cs_new_protected:Npn \__arxiv_strip_orcid:n #1
  {
    \tl_set:Nn \l_tmpa_tl { #1 }
    \regex_replace_all:nnN { \c{orcidlink}\{.*?\} } { } \l_tmpa_tl
    \regex_replace_all:nnN { \c{textsuperscript}\{\} } { } \l_tmpa_tl
    \tl_use:N \l_tmpa_tl
  }
\RenewDocumentCommand \creditauthor { m m }
  {
    \seq_gput_right:Nn \g_stm_credits_seq
      { { \exp_not:N\bfseries \__arxiv_strip_orcid:n { #2 } : }\space #1 }
  }
\RenewDocumentCommand \emailauthor { m m }
  {
    \int_gincr:N \g_ead_int
    \tl_set:Nn \l_tmpa_tl { #2 }
    \regex_replace_all:nnN { \c{orcidlink}\{.*?\} } { } \l_tmpa_tl
    \regex_replace_all:nnN { \c{textsuperscript}\{\} } { } \l_tmpa_tl
    \seq_gput_right:Nx \g_stm_ead_seq
      {
        { \exp_not:n { \ttfamily \tl_to_str:n { #1 } } }
        \exp_not:n { \parsename } { \exp_not:V \l_tmpa_tl }
        \exp_not:n { \space(\eadauthor) }
      }
  }
\ExplSyntaxOff

\ExplSyntaxOn
\cs_set:Npn \__first_footerline: { \hbox{} }
\ExplSyntaxOff

\usepackage{etoolbox,xpatch}
\newcommand{\postprintnotice}{%
  \par\small\centering
  This is the accepted manuscript (post-print) of the article.
  The final published version (version of record) appeared in
  \textit{Control Engineering Practice}, 178 (2027), 107248,
  and is available at
  \url{https://doi.org/10.1016/j.conengprac.2026.107248}.
  \copyright{} 2026 Elsevier Ltd.
  This manuscript version is made available under the
  CC~BY-NC-ND~4.0 license:
  \url{https://creativecommons.org/licenses/by-nc-nd/4.0/}.
  \dashrule{0pt}{3pt}}
\xpretocmd{\Abstract}{\postprintnotice}{}{}

\begin{document} \let\WriteBookmarks\relax
    \def\floatpagepagefraction{1}
    \def\textpagefraction{.001}
    \shorttitle{Payload generalization on feedforward control actions for trajectory tracking in robots with rigid and elastic joints}
    \shortauthors{G. Pazzi et~al.}

    \title [mode = title]{Payload generalization on feedforward control actions for trajectory tracking in robots with rigid and elastic joints}
    \tnotemark[1]

    \tnotetext[1]{This research is partially supported by the European Union’s Horizon 2020 Research and Innovation Programme under Grant Agreement No. 101017274 (DARKO), in part by EU’s MSCA NEUTRAWEED grant no. 101182891, and in part by the Italian Ministry of Education and Research in the framework of the ``FoReLab'' (Future-oriented Research Lab) Project (Departments of Excellence). }

    \author[1,2]{Giorgia Pazzi\textsuperscript{\orcidlink{0009-0008-2606-0219}}}[%
        type=editor,
        auid=000,
        bioid=1,
        ]
    \cormark[1]
    \ead{giorgia.pazzi@phd.unipi.it}
    \credit{Writing – original draft, Validation, Software, Methodology, Formal analysis}

    \author[1,2]{Giorgio Simonini\textsuperscript{\orcidlink{0009-0006-4696-5889}}}[%
        type=editor,
        auid=000,
        bioid=2,
        ]
    \credit{Writing – original draft, Methodology}

    \author[1,2]{Michele Pierallini\textsuperscript{\orcidlink{0000-0003-0547-2747}}}[%
        type=editor,
        auid=000,
        bioid=3,
        ]
    \credit{Writing – review \& editing}

    \author[1,2]{Paolo Salaris\textsuperscript{\orcidlink{0000-0003-1313-5898}}}[%
        type=editor,
        auid=000,
        bioid=4,
        ]
    \credit{Writing – review \& editing, Funding acquisition}

    \author[1,2,3]{Giorgio Grioli\textsuperscript{\orcidlink{0000-0002-5310-2997}}}[%
        type=editor,
        auid=000,
        bioid=5,
        ]
    \credit{Writing – review \& editing, Conceptualization}

    \author[1,2]{Manolo Garabini\textsuperscript{\orcidlink{0000-0002-5873-3173}}}[%
        type=editor,
        auid=000,
        bioid=6,
        ]
    \credit{Writing – review \& editing, Supervision, Conceptualization}

    \author[1,2]{Franco Angelini\textsuperscript{\orcidlink{0000-0003-2559-9569}}}[%
        type=editor,
        auid=000,
        bioid=7,
        ]
    \credit{Writing – review \& editing, Supervision, Funding acquisition, Conceptualization}

    \affiliation[1]{organization={Centro di Ricerca ``En\-ri\-co Pi\-ag\-gio'', U\-ni\-ver\-si\-t\`{a} di Pisa},
                    addressline={Largo Lucio Lazzarino 1},
                    city={Pisa},
                    postcode={56126},
                    country={Italy}}

    \affiliation[2]{organization={Dipartimento di Ingegneria dell'Informazione, U\-ni\-ver\-si\-t\`{a} di Pisa},
                    addressline={Via G. Caruso 16},
                    city={Pisa},
                    postcode={56122},
                    country={Italy}}

    \affiliation[3]{organization={Soft Robotics for Human Cooperation and Rehabilitation, Fondazione Istituto Italiano di Tecnologia},
                    city={Genoa},
                    postcode={16163},
                    country={Italy}}

    \cortext[cor1]{Corresponding author.}

\begin{abstract}
Industries employ robots for repetitive tasks due to their high precision and performance, typically achieved through feedforward control actions based on an accurate robot description. However, an exact dynamic model can be difficult to obtain, especially for robots with elastic elements or when the payload changes.
Learning-based feedforward approaches are promising for trajectory tracking under model uncertainties, but typically face generalization issues when task parameters vary.
This paper proposes a novel method to generalize the feedforward control action with respect to payload parameters, independently of the underlying controller.
Firstly, the minimal number of example objects needed for payload generalization is derived.
Then, an algorithm is proposed to execute the same trajectory with a new payload, given only its parameters, without knowledge of the robot's dynamic model and even in the presence of elastic joints.
The effectiveness of the proposed algorithm is validated in simulations and experiments on tracking tasks with different robots and controllers.
Two sets of experiments are performed: on a standard Franka Emika Panda robot, and on a Franka Emika Panda robot equipped with an elastic wrist.
\end{abstract}

\begin{keywords}
   Payload generalization \sep Feedforward control \sep Iterative learning control \sep Robotics \sep Soft robotics \sep Trajectory tracking
\end{keywords}

\maketitle

%%%%%%%%%%%%%%%%%%%%%%%%%%%%%%%%%%%%%%%%%%%%%%%%%%%%%%%%%%%%%%%%%%%%%%%%%%%
%% Introduction
%%%%%%%%%%%%%%%%%%%%%%%%%%%%%%%%%%%%%%%%%%%%%%%%%%%%%%%%%%%%%%%%%%%%%%%%%%%
\section{Introduction}
\label{sec:introduction}
Robotics proved to be a key technology in logistics and manufacturing~\citep{liu2024application}, demonstrating exceptional effectiveness in performing repetitive tasks such as pick-and-place operations~\citep{lobbezoo2021reinforcement, surati2021pick}, unwrapping pallets~\citep{gabellieri2022autonomous}, handling fragile items~\citep{Angelini_2020_softhandler, pagliarani2024variable}, and even throwing goods and objects~\citep{zeng2020tossingbot}.

The ability of robots to transform such industries comes from their safety during human-robot interaction tasks and their excellent tracking performance~\citep{palleschi2021fast}. 
Safety is increasingly achieved thanks to the physical adoption of elastic elements within the robot joints, leading to the so-called articulated soft robots~\citep{albu2008soft,della2020soft}. However, this progress comes at the expense of increased complexity in robot modeling and description. 
In scenarios where the robot's dynamic model is known with uncertainties, adaptive control techniques can be employed to estimate and compensate for unknown parameters.
\rev{Such schemes exploit the linearity of the manipulator dynamics with respect to the dynamic parameters, which they estimate online, from the known model structure, to compute the control torque~\citep{SlotineLi1987}; recent formulations even recover the exact parameters without requiring persistency of 
excitation~\citep{Arteaga2024}.} 
\begin{figure}%[t]
\centering
\includegraphics[width=\columnwidth]{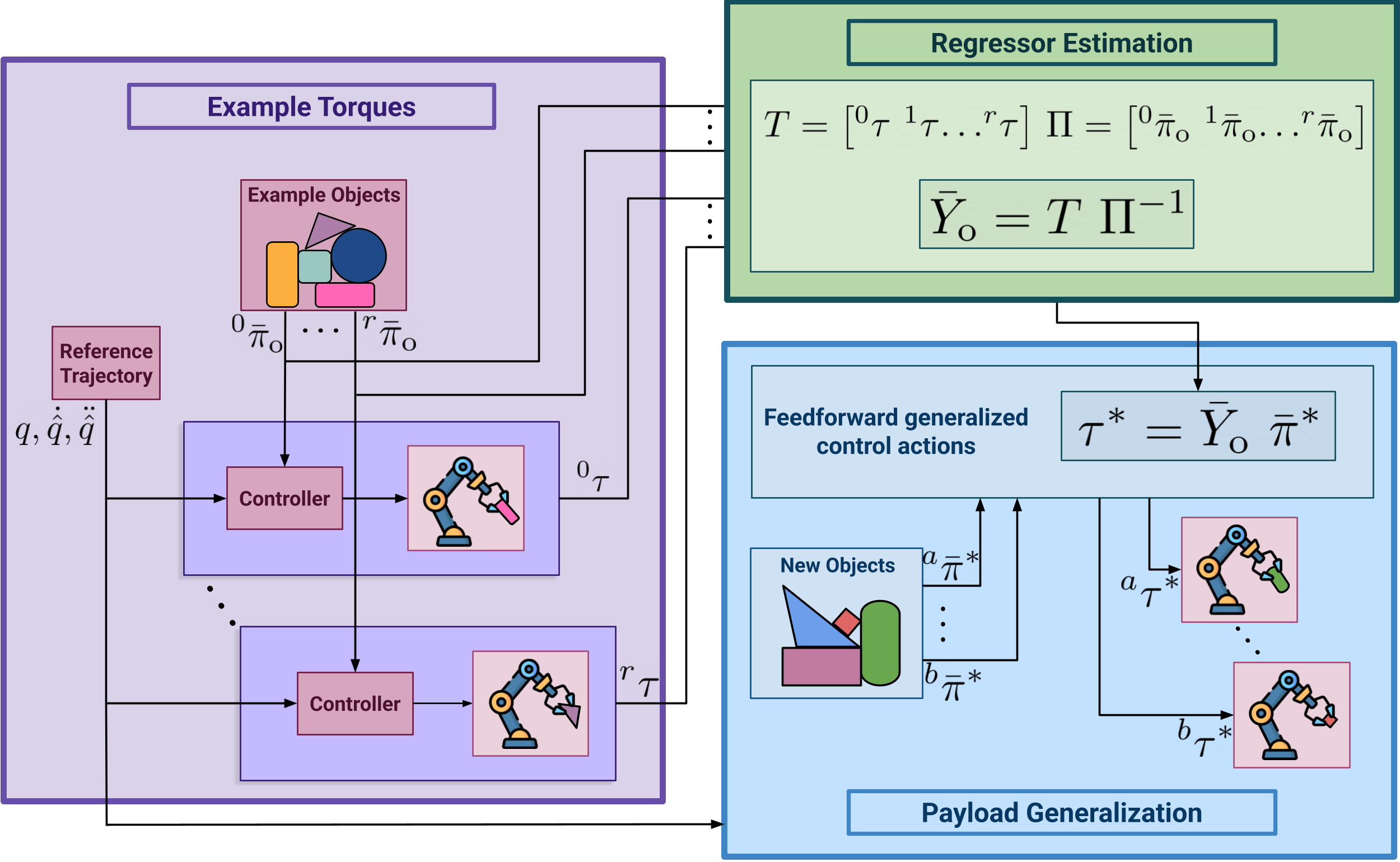}
    \caption{Scheme of the proposed payload generalization method. The figure shows the main steps of the procedure: first, feedforward torques are collected for a set of example objects with known parameters; then, these data are used to compute the regressor matrix that enables the generation of feedforward control actions for any new payload with known parameters.}
    \label{fig:genscheme}
\end{figure}

However, such feedback-based techniques often require high gains that stiffen dynamics, undermining the compliance of soft robots~\citep{pierallini2022swing}\rev{, and although recent adaptive computed-torque schemes mitigate this by enabling low feedback gains~\citep{adaptiveCT_simonini_2025}, they remain feedback- and model-based}. 
Additionally, adaptive control must re-initialize parameter estimates when the robot's dynamics change, leading to transient phases of suboptimal performance~\citep{middletone_adaptivect_1986}.
\rev{The same parameter linearity also underlies load identification, where the payload inertial parameters are estimated offline from measured joint torques and motion~\citep{Atkeson1986}: this is the inverse of the problem addressed here, which instead exploits known payload parameters to generate the corresponding feedforward torques.}

In contrast, good tracking performance is often achieved by feedforward control actions~\citep{liu2019industrial}, which are computed either from the perfect knowledge of the system's dynamics~\citep{chhatoi2023optimal} or through learning-based techniques~\citep{chen2024data}. 
Since the former approach relies on exact system modeling, it becomes particularly challenging in many real-world scenarios. For this reason, the latter model-free and learning-based control strategies have been increasingly investigated in recent years.

An example of this class of controllers is Iterative Learning Control (ILC)~\citep{Arimoto84, chen2020generalized, pierallini2025fishing}, which offers a powerful solution for improving the transient performance of systems that operate repetitively over a fixed time interval, finding application mainly in industrial robotics~\citep{Tayebi_2008, Saab_2022}. Unlike classical control techniques, ILC leverages previous iteration data to minimize the tracking error in a model-free, data-driven fashion~\citep{Bristow_2006}. 

Despite their promising performance, learning-based methods, particularly ILC, struggle to generalize pre-learned control actions to new scenarios.
Indeed, any slight change in the scene, such as the need to perform a different trajectory or move different payloads, leads to the use of a new control action and, since the model of the system is not entirely available, this is not trivial to achieve, and it requires the repetition of the entire learning phase whenever the task changes~\citep{li2022five, Angelini_2020_trajgen}.

Several studies have addressed the issue of generalization in learning-based controllers. For instance, the authors in~\citet{Mengacci_2021} proposed solutions to generalize the learned control action in terms of stiffness profile. In~\citet{Angelini_2020}, \citet{DeLuca2002} a procedure is developed to scale and generalize learned control actions to execute desired trajectories at varying velocities.
The problem is further examined from the perspective of final point~\citep{freeman2010iterative} and trajectory generalization in~\citet{Angelini_2020_trajgen}. 
However, \rev{none of these solutions generalizes the control action with respect to the payload's dynamic parameters, nor in a feedforward fashion for robots that may even present elastic joints}, which is a key feature in many pick-and-place tasks to execute precise, safe, and fast operations.
In many applications, such as manufacturing or logistics, robots repeatedly manipulate several objects, each with a different mass, inertia, or geometry. Conventional feedforward learning-based controllers typically require re-learning whenever some task parameters change, e.g., the manipulated object, leading to a time-consuming process that limits scalability.

This paper develops a novel method to generalize feedforward control actions, acquired with any controller, for tracking a desired trajectory with different payloads, Fig.~\ref{fig:genscheme}. 
To this end, it is first demonstrated that the number of examples required to generalize the feedforward control action corresponds to the number of dynamic parameters defining the object, plus one.
An algorithm is then introduced, capable of executing the same trajectory for a new payload based solely on its given parameters.
The core idea is to collect feedforward control actions for a small set of example objects. From this data, a regressor matrix is computed, enabling the estimation of the feedforward control required for any new object with known parameters. 
This makes the method easy to implement in industrial settings where such parameters are usually available or measurable.
Additionally, this purely data-driven, controller-independent approach requires no prior knowledge of the robot's dynamic model and can improve tracking performance in real-world applications, even when elastic elements are present in the robot's structure.

The effectiveness of the proposed method is validated for both rigid and soft articulated robots, showcasing its ability to operate independently of the manipulator's structure and controller while relying solely on the dynamic parameters used to describe the payload.
Specifically, experiments are also conducted on the Franka Emika Panda manipulator, equipped with a variable-stiffness actuator wrist and the SoftClaw\footnote{\href{https://qbrobotics.com/product/qb-softclaw/}{qb-softclaw}}.

This paper is organized as follows.
In Section~\ref{sec:problemstatement}, the problem, the system model, and the payload description are defined. 
The proposed payload generalization approach is detailed in Section~\ref{sec:problemsolution}, and Section~\ref{sec:validation} provides an extensive validation of the approach, including simulation results and real-world experiments. 
Finally, conclusions and future research directions are discussed in Section~\ref{sec:conclusion}.

%%%%%%%%%%%%%%%%%%%%%%%%%%%%%%%%%%%%%%%%%%%%%%%%%%%%%%%%%%%%%%%%%%%%%%%%%%%
%% Problem Statement
%%%%%%%%%%%%%%%%%%%%%%%%%%%%%%%%%%%%%%%%%%%%%%%%%%%%%%%%%%%%%%%%%%%%%%%%%%%
\section{Problem statement}
\label{sec:problemstatement}
This section defines the payload generalization problem for trajectory-tracking tasks for robots with rigid and elastic joints.

%%%%%%%%%%%%%%%%%%%%%%%%%%%%%%%%%%%%%%%%%%%%%%%%%%%%%%%%%%%%%%%%%%%%%%%%%%%%%
\subsection{System model}\label{subsec:system_model}
Let us consider a system with $n$ joints, of which $m$ are soft with $m \leq n$.
\revii{The model is derived under the standard reduced-model assumption of negligible inertia coupling between the link variables $q$ and the motor variables $\theta$, and of non-eccentric motor inertia~\citep{Spong_1987}.}
The dynamic equations of the system are inspired by 
\citet{Albu-Schäffer2016} and can be written as
{\small
\begin{align}
    &M(q)\ddot{q} + C(q, \dot{q})\dot{q} + G(q) - K(S_\theta \theta-q) = (I-S_\tau)\tau, \label{eq:linksdynamics} \\
    &J S_\theta\ddot{\theta} + DS_\theta\dot{\theta} + K(S_\theta\theta-q) = S_\tau\tau\;,\label{eq:motorsdynamics}
\end{align}
}
where the time dependence is omitted, $q \in \mathbb{R}^n$ includes the joint angles with their time derivatives, i.e., $\dot{q}$ and $\ddot{q}$, and $\theta \in \mathbb{R}^m$ indicates the motor angles, with their time derivatives, i.e., $\dot{\theta}$ and $\ddot{\theta}$. 
$I \in \mathbb{R}^{n \times n}$ is the identity matrix.
The matrices $M$, $C \in \mathbb{R}^{n \times n}$, and the vector $G \in \mathbb{R}^n$ represent the inertia matrix, the Coriolis and centrifugal matrix, and the gravitational force field, respectively. Recalling \eqref{eq:motorsdynamics}, $J$, $D$, and $K \in \mathbb{R}^{n \times n}$ of rank $m$ denote the motor inertia, friction, and stiffness matrices, respectively.
Since the motor dynamics matrices are of dimension $n \times n$, even though only $m$ joints are soft, the system is underactuated. To this end, the selection matrices, $S_{\theta} \in \mathbb{R}^{n \times m}$ and $S_{\tau} \in \mathbb{R}^{n \times n}$, are introduced, allowing the motor dynamics to be associated exclusively with the soft joints.
\revii{Let $\mathcal{E} = \{i_1, \dots, i_m\} \subseteq \{1, \dots, n\}$, with $i_1 < \dots < i_m$, collect the indices of the elastic joints, and let $e_i \in \mathbb{R}^n$ denote the $i$th column of $I$. 
The selection matrices are then
\begin{align}
    S_\theta &= \begin{bmatrix} e_{i_1} & \dots & e_{i_m} \end{bmatrix}\,, \label{eq:stheta} \\
    S_\tau &= \operatorname{diag}(s_1, \dots, s_n)\,,\label{eq:stau}
\end{align}
with $s_i = 1$ if $i \in \mathcal{E}$ and $s_i = 0$ otherwise.
Accordingly, $J$, $D$, and $K$ have null rows at the rigid-joint positions, hence their rank is $m$.}
Finally, $\tau \in \mathbb{R}^n$ is the actuation torque vector.
\revii{Under the reduced-model assumptions, a change of payload affects only the link dynamics~\eqref{eq:linksdynamics}: being rigidly attached to the end-effector, the payload modifies the inertial, Coriolis, and gravitational terms $M$, $C$, and $G$, which thus depend on its parameters.
Conversely, the motor dynamics~\eqref{eq:motorsdynamics} remains unaffected, since $J$, $D$, and $K$ characterize the actuators alone and the link--motor coupling enters only through the load-independent elastic term $K(S_\theta\theta-q)$.
Such dependence is left implicit here, and made explicit in Section~\ref{sec:problemsolution}.}

The following hypotheses are assumed for~\eqref{eq:linksdynamics}-\eqref{eq:motorsdynamics}:
\begin{itemize}
    \phantomsection\label{hp:sea}
    \item[H1)] The compliant joints are modeled as series elastic actuators (SEA) so that each soft joint is associated with a joint angle $q_i$ and motor angle $\theta_i \, \forall i$.
    \phantomsection\label{hp:diagandconstmat}
    \item[H2)] The matrices $J$, $D$, $K$ are diagonal and constant.  This reflects the mechanical decoupling and the linear elasticity of the SEA actuators, which are standard assumptions for the considered class of robots.
\end{itemize}

%%%%%%%%%%%%%%%%%%%%%%%%%%%%%%%%%%%%%%%%%%%%%%%%%%%%%%%%%%%%%%%%%%%%%%%%%%%%%
\vspace{-0.2em}
\subsection{Goal}
Consider a reference trajectory $\hat{q}: \left[0,\, t_\textnormal{f}\right] \rightarrow \mathbb{R}^n$ in the joint space.
Let us assume the nominal model of the robot \eqref{eq:linksdynamics}-\eqref{eq:motorsdynamics} is unknown, while the payload parameters are known and defined by the vector $\pi_\textnormal{o} \in \mathbb{R}^{r}$.
Assume also the torques required to track a desired trajectory are known for $p$ payloads (with parameters $^i\pi_\textnormal{o}$):
\begin{equation}\label{eq:exampletorques}
    ^i\tau(\cdot): \left[0,\, t_\textnormal{f}\right] \rightarrow \mathbb{R}^n, \quad \forall i = 0, \dots, p-1.
\end{equation}
The objectives of this work are:
\begin{itemize}
    \phantomsection\label{goal:g1}
    \item[G1)] to identify the minimal set of example objects, i.e., $p$, such that the corresponding feedforward torques 
    \({}^{i}\tau\) \(\forall i = 0, \dots, p-1\), allow for the computation of a well-defined control action;
    \phantomsection\label{goal:g2}
\item[G2)]  to develop a method for computing feedforward control actions that achieve a small trajectory tracking error $\hat{q}-q$, using the example torques \({}^{i}\tau,\) \(\text{for } i = 0, \dots, p-1\), under the system dynamics given by \eqref{eq:linksdynamics}-\eqref{eq:motorsdynamics}\rev{, and assuming $\hat{q}$ is feasible, i.e., there exist an initial state and an input $\tau(\cdot)$ such that the resulting motion satisfies $q(t)=\hat{q}(t)$, $\forall t \in \left[0,\, t_\textnormal{f}\right]$}.
\end{itemize}

%%%%%%%%%%%%%%%%%%%%%%%%%%%%%%%%%%%%%%%%%%%%%%%%%%%%%%%%%%%%%%%%%%%%%%%%%%%
%% Problem Solution
%%%%%%%%%%%%%%%%%%%%%%%%%%%%%%%%%%%%%%%%%%%%%%%%%%%%%%%%%%%%%%%%%%%%%%%%%%
\section{Proposed approach}
\label{sec:problemsolution}
This section presents the problem's solution.
As illustrated in Fig.~\ref{fig:genscheme}, the proposed method is organized into three main blocks:
\begin{enumerate}[label=\roman*)]
    \item Example Torques: in the first block, feedforward torques are collected for a set of example objects with known parameters. These data provide the basis for capturing the influence of payload parameters on the system dynamics.
    \item Regressor Estimation: in the second block, the collected data are exploited to compute the regressor matrix, which allows the system dynamics to be expressed in a linearly parameterized form.
    \item Payload Generalization: in the final block, the estimated regressor is used to compute the feedforward generalized torques for any new payload with known parameters.
\end{enumerate}

It is instrumental for the derivation of the proposed method to recall that the system model defined in~\eqref{eq:linksdynamics}-\eqref{eq:motorsdynamics}, under assumptions~\hyperref[hp:sea]{H1}--\hyperref[hp:diagandconstmat]{H2}, can be expressed as the product of a regressor matrix and the dynamic parameters vector of the system, see, e.g.,~\citet{Siciliano}. 
Specifically, \rev{given any payload,} \eqref{eq:linksdynamics}-\eqref{eq:motorsdynamics} can be written as 
\begin{equation}\label{eq:regressorform}
    Y(q,\dot{q},\ddot{q},\theta,\dot{\theta},\ddot{\theta}) \pi = \begin{bmatrix}   I-S_{\tau} \\ S_{\tau} \end{bmatrix} \tau \,,
\end{equation}
where $Y(q,\dot{q},\ddot{q},\theta,\dot{\theta},\ddot{\theta}) \in \mathbb{R}^{n \times l}$ is the regressor matrix, and $\pi \in \mathbb{R}^{l}$ is the system's dynamic parameters vector. Please note that~\citep{Siciliano} already presents the linear formulation of \eqref{eq:linksdynamics}, and recalling \hyperref[hp:diagandconstmat]{H2}, one can include the linear formulation of~\eqref{eq:motorsdynamics}.

The first step is to select the parameters of the object at the end-effector that influence the system's dynamics.
To do this, the parameters vector $\pi$ is decomposed into components that are directly associated with the different terms of \eqref{eq:linksdynamics}-\eqref{eq:motorsdynamics}.
This leads to rewriting \eqref{eq:regressorform} as follows
\begin{align}
    &Y_{\textnormal{s}} \pi_{\textnormal{s}} + Y_{\textnormal{o}} \pi_{\textnormal{o}} - Y_{q S_\theta \theta} \pi_K = (I - S_\tau) \tau \,,\label{eq:hybridreglink}\\
    &Y_{S_\theta \ddot{\theta}} \pi_J + Y_{S_\theta \dot{\theta}} \pi_D + Y_{qS_\theta \theta} \pi_K = S_\tau \tau\,, \label{eq:hybridregmotor}
\end{align}
where $\pi_\textnormal{s} \in \mathbb{R}^{l_\textnormal{s}}$ represents the dynamic parameters of the links, $\pi_\textnormal{o}\in \mathbb{R}^{r}$ corresponds to the parameters of the payload, and, recalling \hyperref[hp:diagandconstmat]{H2}, $Y_{S_\theta \ddot{\theta}}$, $Y_{S_\theta \dot{\theta}}$, and $Y_{qS_\theta \theta} \in \mathbb{R}^{n \times m}$  are matrices of rank $m$, and $\pi_J$, $\pi_D$, and $\pi_K\in\mathbb{R}^m$ are the inertia, damping and stiffness parameters vectors, respectively.
The links' dynamic parameters are denoted by the subscript $(\cdot)_\textnormal{s}$, and those of the payload are denoted with $(\cdot)_\textnormal{o}$.
\rev{Following this decomposition, a change of payload affects only the dynamic parameters collected in $\pi_\textnormal{o}$. The regressor matrix $Y$ depends only on the motion variables and not on the dynamic parameters, and the same holds for all the sub-blocks into which it is decomposed; they therefore remain unchanged when the payload varies~\citep{Atkeson1986, Hollerbach2016}. 
Note also that, as discussed in Section~\ref{sec:problemstatement}, the motor dynamics remains unaffected by the payload.}

\rev{It is instrumental for the method derivation to introduce the following definition.}

\begin{definition}\label{def:heterogeneous}
\rev{
A set of $p$ payloads, with dynamic parameters ${}^{i}\pi_{\textnormal{o}}$, $i=0,\dots,p-1$, is \emph{heterogeneous} if these vectors are affinely independent~\citep{Roman1992}. Equivalently, for any $i=0,\ldots,p-1$, the vectors ${}^{j}\pi_{\mathrm{o}}-{}^{i}\pi_{\mathrm{o}}$, $j\neq i$, are linearly independent.
}
\end{definition}

Theorem~\ref{thm:goal1} and Theorem~\ref{thm:goal2} are proposed to solve the objectives G1 and G2 defined in Section~\ref{sec:problemstatement}.

\hyperref[goal:g1]{G1} is addressed by the following Theorem.
\begin{theorem}\label{thm:goal1}

     Let us consider the system \eqref{eq:linksdynamics}-\eqref{eq:motorsdynamics}, under assumptions \hyperref[hp:sea]{H1}--\hyperref[hp:diagandconstmat]{H2}, and its linear formulation w.r.t. the dynamic parameters in \eqref{eq:hybridreglink}-\eqref{eq:hybridregmotor}.
     
    Let $r=\textnormal{dim}(\pi_\textnormal{o})$ be the number of dynamic parameters used to describe each payload.

    \rev{Let us consider a set of $p$ heterogeneous known payloads.}

    To compute generalized torques to execute the desired trajectory $\hat{q}(\cdot): \left[0,\, t_\textnormal{f}\right] \rightarrow \mathbb{R}^n$ with any new object, the number of distinct example objects must be at least
    \begin{equation}\label{eq:numberobjects}   
        p = r + 1 .
    \end{equation}
\end{theorem}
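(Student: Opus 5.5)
Along the fixed reference trajectory, the feedforward torque is an affine function of the payload parameters. The plan is to count the unknowns in that affine map and show that $r+1$ heterogeneous examples are exactly enough to determine it.

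First, impose perfect tracking, $q(t)=\hat q(t)$, which is possible by feasibility of $\hat q$. Under H1--H2 the motor trajectory $\theta(\cdot)$ is then fixed by the link dynamics through the elastic coupling. Evaluating \eqref{eq:hybridreglink}--\eqref{eq:hybridregmotor} along this motion, every regressor block is a known-in-principle function of time only. As remarked after the decomposition, the blocks do not depend on the payload: $Y_\textnormal{s}$, $Y_\textnormal{o}$, and the motor blocks do not change when $\pi_\textnormal{o}$ changes.

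Stacking the two equations gives, for each payload $i$,
\begin{equation*}
{}^i\tau(t) = a(t) + B(t)\,{}^i\pi_\textnormal{o}, \qquad B(t)\in\mathbb{R}^{n\times r}.
\end{equation*}
Here $a(t)$ collects the link, motor and stiffness contributions, and $B(t)$ is built from $Y_\textnormal{o}$ together with the selection matrices. The subtle point in this step is that $\theta(\cdot)$ itself depends on the payload: the elastic torque $K(S_\theta\theta-q)$ must balance the payload-dependent link torque. I would handle this by substituting \eqref{eq:hybridregmotor} into \eqref{eq:hybridreglink}, eliminating $Y_{qS_\theta\theta}\pi_K$, so that the total input $\tau$ appears directly. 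Because $S_\tau\theta$-terms are linear with constant coefficients (H2) and the link equation is linear in $\pi_\textnormal{o}$, the elastic deflection, and hence the motor terms, are affine in $\pi_\textnormal{o}$. I expect this verification, that the affine structure survives the elastic joints, to be the main obstacle, and I would check it component-wise on the soft indices $\mathcal{E}$.

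The unknowns are therefore $a(t)$ and $B(t)$, that is $n(r+1)$ scalars at each $t$. Each example supplies $n$ equations, so at least $r+1$ examples are required; with fewer, the new-payload torque $a+B\pi_\textnormal{o}$ is not determined for a generic $\pi_\textnormal{o}$. For sufficiency, subtract the example $i=0$:
\begin{equation*}
\begin{bmatrix}{}^1\tau-{}^0\tau & \cdots & {}^{r}\tau-{}^0\tau\end{bmatrix} = B\,\Delta\Pi, \qquad \Delta\Pi=\begin{bmatrix}{}^1\pi_\textnormal{o}-{}^0\pi_\textnormal{o} & \cdots & {}^{r}\pi_\textnormal{o}-{}^0\pi_\textnormal{o}\end{bmatrix}.
\end{equation*}
By Definition~\ref{def:heterogeneous}, $\Delta\Pi\in\mathbb{R}^{r\times r}$ is invertible, so $B=[\cdots]\Delta\Pi^{-1}$ and $a={}^0\tau-B\,{}^0\pi_\textnormal{o}$ are uniquely determined. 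This gives the well-defined generalized torque $\tau=a+B\pi_\textnormal{o}$ for any new payload, and hence $p=r+1$.
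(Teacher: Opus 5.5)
Your proof is correct, and its solving step matches the paper's. The paper augments to ${}^i\bar\pi_\textnormal{o}=[1\ \ {}^i\pi_\textnormal{o}^\top]^\top$, writes $T(t)=\bar Y_\textnormal{o}(t)\Pi$, and gets $p\ge r+1$ from the full-row-rank requirement on $\Pi$. It then shows invertibility by subtracting the first column of $\Pi$, which gives \eqref{eq:matrixPitriang} with your $\Delta\Pi$ as the lower-right block. Your differencing is that same column operation applied to $T$, and ${}^0\tau+[\,{}^1\tau-{}^0\tau\ \cdots\ {}^r\tau-{}^0\tau\,]\,\Delta\Pi^{-1}(\pi^*-{}^0\pi_\textnormal{o})$ coincides with $T\Pi^{-1}\bar\pi^*$. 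The paper's augmented form buys the compact formula of Theorem~\ref{thm:goal2} and the direct pseudo-inverse extension to $p>r+1$.

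The genuine difference is how affinity in $\pi_\textnormal{o}$ is justified. The paper sums \eqref{eq:hybridreglink}--\eqref{eq:hybridregmotor} into \eqref{eq:hybridregextended}. In \eqref{eq:torqueobji} it then treats $Y_\textnormal{M}\pi_\textnormal{M}$ as a torque $\tau_\textnormal{M}$ common to all payloads, with exactly $Y_\textnormal{o}$ as the coefficient of ${}^i\pi_\textnormal{o}$. But $Y_\textnormal{M}$ contains $S_\theta\dot\theta$ and $S_\theta\ddot\theta$, and along a fixed $\hat q$ the motor trajectory does change with the payload (cf.\ \eqref{eq:theta}). That decomposition is therefore literally valid only for $m=0$. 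You observe that the elastic rows of the link equation make the deflection, and hence $\theta$, $\dot\theta$, $\ddot\theta$, affine in $\pi_\textnormal{o}$ with payload-independent time-varying coefficients. That is exactly what keeps ${}^i\tau=a+B\,{}^i\pi_\textnormal{o}$ valid, so your route is the more careful one for elastic joints and still reaches the same count.

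Two fixes would make your write-up consistent. First, drop the earlier sentence saying the motor blocks do not change with $\pi_\textnormal{o}$, since your next paragraph refutes it. Second, $B(t)$ is not just $Y_\textnormal{o}$ combined with selection matrices: on the elastic rows it also contains the first and second time derivatives of $Y_\textnormal{o}$ along $\hat q$, scaled by the ratios of motor damping and inertia to joint stiffness. This requires $\hat q$ to be four times differentiable, but it is harmless because $a$ and $B$ are identified from data rather than from the model.
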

\begin{proof}\label{proof1}
Summing~\eqref{eq:hybridreglink} and~\eqref{eq:hybridregmotor} yields:
\begin{equation}\label{eq:hybridregextended}
    Y_{s} \pi_{s} + Y_{S_\theta \ddot{\theta}} \pi_J + Y_{S_\theta \dot{\theta}} \pi_D + Y_\textnormal{o} \pi_\textnormal{o} = \tau\,.
\end{equation}
In~\eqref{eq:hybridregextended}, one can combine all terms related to manipulator dynamics without payload at the end-effector as follows:
\begin{equation}
\label{eq:not_used}
    Y_\textnormal{s} \pi_\textnormal{s} + Y_{S_\theta \ddot{\theta}} \pi_J + Y_{S_\theta \dot{\theta}} \pi_D = Y_\textnormal{M}(q,\dot{q},\ddot{q},S_\theta\dot{\theta},S_\theta\ddot{\theta}) \pi_\textnormal{M}
\end{equation}
where $Y_\textnormal{M}(q,\dot{q},\ddot{q},S_\theta\dot{\theta},S_\theta\ddot{\theta}) = \begin{bmatrix}
Y_{s} & Y_{S_\theta \ddot{\theta}} & Y_{S_\theta \dot{\theta}} \end{bmatrix}$ and $\pi_\textnormal{M} = \begin{bmatrix} \pi_\textnormal{s}^\top & \pi_J^\top & \pi_D^\top \end{bmatrix}^\top$.
Thus,~\eqref{eq:hybridregextended} can be rearranged as
\begin{equation}\label{eq:dynregsep}
Y_\textnormal{M}(q,\dot{q},\ddot{q},S_\theta\dot{\theta},S_\theta\ddot{\theta}) \pi_\textnormal{M} + Y_\textnormal{o}(q,\dot{q},\ddot{q}) \pi_\textnormal{o} = \tau_\textnormal{M} + \tau_\textnormal{o} = \tau\,,
\end{equation}
where the torque vector $\tau \in \mathbb{R}^n$ is split into two components: $\tau_\textnormal{M} \in \mathbb{R}^n$ which is the nominal torque required to execute the desired motion $\hat q(t)$ without any object at the end-effector, and $\tau_\textnormal{o} \in \mathbb{R}^n$, which is the torque variation introduced by the presence of the object at the end-effector.
For the $i$th object, \eqref{eq:dynregsep} becomes:
\begin{equation}\label{eq:torqueobji}
    ^i\tau = \tau_\textnormal{M} + \ ^i\tau_\textnormal{o} = Y_\textnormal{M} \ \pi_\textnormal{M} + Y_\textnormal{o} \ ^i\pi_\textnormal{o}\,,
\end{equation}
where ${}^i\tau \in \mathbb{R}^n$ denotes the joint torques applied to the robot to follow the desired trajectory when the manipulator holds object $i$ at the end-effector.
Recalling~\eqref{eq:torqueobji}, the joint torques for object $i$ can be expressed as the product of the regressor matrix and dynamic parameter vector, i.e.,
\begin{equation}\label{eq:torqueregobji}
    ^i\tau(t) = \bar{Y}_\textnormal{o}(t) \ ^i\bar{\pi}_\textnormal{o},
\end{equation}
where $\bar{Y}_\textnormal{o}(t) = \begin{bmatrix} Y_\textnormal{M} \pi_\textnormal{M} & Y_\textnormal{o} \end{bmatrix}$, and $^i\bar{\pi}_\textnormal{o} = \begin{bmatrix} 1 & ^i\pi_\textnormal{o}^\top \end{bmatrix}^\top$.

Since payload parameters $^i\bar{\pi}_\textnormal{o},\forall i$ and joint torques for trajectory tracking $\, {}^i\tau(t), \forall i$ are known, what remains to be estimated is the regressor matrix, i.e., $\bar{Y}_\textnormal{o}(t)$.
To this end, the joint torques for a set of $p$ known objects are arranged into a matrix, namely $T(t) \in \mathbb{R}^{n \times p}$:
\begin{equation}\label{eq:matrixT}
    T(t) = \begin{bmatrix}^0\tau(t) & ^1\tau(t) & ^2\tau(t) & \ldots & ^{(p-1)}\tau(t)\end{bmatrix}\,.
\end{equation}
Grouping the known dynamic parameters $^i\bar{\pi}_\textnormal{o}$ for each object into a matrix $\Pi \in \mathbb{R}^{(r+1) \times p}$ yields 
\begin{equation}
    \Pi = \begin{bmatrix} ^0\bar{\pi}_\textnormal{o} & ^1\bar{\pi}_\textnormal{o} & ^2\bar{\pi}_\textnormal{o} & \ldots & ^{(p-1)}\bar{\pi}_\textnormal{o} \end{bmatrix}\,.
    \label{eq:matrixPi}
\end{equation}

\rev{Given~\eqref{eq:torqueregobji},~\eqref{eq:matrixT}, and~\eqref{eq:matrixPi}, the following equation can be written}
\begin{equation}\label{eq:matrixform}
    T(t) = \bar{Y}_\textnormal{o}(t) \Pi , \quad \rev{\forall t \in [0, t_\textnormal{f}]} .
\end{equation}

The dimensions of the matrices are $T(t) \in \mathbb{R}^{n \times p}$, $\Pi \in \mathbb{R}^{(r+1) \times p}$, and $\bar{Y}_\textnormal{o}(t) \in \mathbb{R}^{n \times (r+1)}$, where $r=\textnormal{dim}(\pi_\textnormal{o})$ is the number of payload dynamic parameters and $r+1=\textnormal{dim}(\bar{\pi}_\textnormal{o})$.
To generalize the control actions for different payloads, the regressor matrix $\bar{Y}_\textnormal{o}(t)$ is estimated from~\eqref{eq:matrixform}.
This requires matrix $\Pi \in \mathbb{R}^{(r+1) \times p}$ to have full row rank\rev{, which in turn implies $p \geq r+1$.
Choosing $p=r+1$ leads to a $\Pi$ matrix which is square and of full rank. Full rank is achieved by construction thanks to the heterogeneity of the example payloads. 
Indeed, subtracting the first column of $\Pi$ from all the others yields the square matrix
\begin{equation}\label{eq:matrixPitriang}
    \begin{bmatrix}
        1 & 0 & \ldots & 0 \\
        {}^{0}\pi_{\textnormal{o}} & {}^{1}\pi_{\textnormal{o}} - {}^{0}\pi_{\textnormal{o}} & \ldots & {}^{r}\pi_{\textnormal{o}} - {}^{0}\pi_{\textnormal{o}}
    \end{bmatrix}\,,
\end{equation}
which is block lower triangular with diagonal blocks of dimensions $1\times1$ and $r\times r$. 
The first block is the nonzero scalar 1, while the second one is nonsingular since, by Definition~\ref{def:heterogeneous}, the $r$ vectors ${}^{i}\pi_{\textnormal{o}} - {}^{0}\pi_{\textnormal{o}}$, $i = 1,\dots,r$, are linearly independent.
Hence $\Pi$ is invertible.
}

\rev{Together with the necessary condition $p \geq r+1$, this establishes} that $p=r+1$ is the minimum number of examples needed to generalize control actions across payloads.
\end{proof}

The result of Theorem~\ref{thm:goal1} establishes that the number of required example objects depends exclusively on the number of dynamic parameters describing the payload. 
Notably, the generalization method is independent of the manipulator’s structure, including the number of degrees of freedom. This independence represents a significant advantage as it eliminates the need for complex and often imprecise identification of the robot’s dynamic parameters. 

\begin{remark}\label{remark1}
    From an implementation perspective, it is noteworthy that \rev{the matrix $\Pi$, defined in~\eqref{eq:matrixPi}, is invertible whenever the example payloads are heterogeneous in the sense of Definition~\ref{def:heterogeneous}, i.e., whenever its columns ${}^i \bar{\pi}_{\textnormal{o}}$ are linearly independent. However, heterogeneity only guarantees invertibility in a qualitative sense. From a numerical standpoint, what matters is how well-conditioned $\Pi$ is, as quantified by its condition number $\kappa(\Pi) = \|\Pi\|\,\|\Pi^{-1}\|$. A small $\kappa(\Pi)$ requires 
    ${}^i \bar{\pi}_{\textnormal{o}}$ to be not merely linearly independent, but also far from linear dependence. Otherwise, $\Pi$ becomes ill-conditioned, i.e., $\|\Pi^{-1}\|$ grows large, compromising the} numerical stability and accuracy of the regressor matrix estimation, i.e., $\bar{Y}_{\textnormal{o}}$.
\end{remark}

\begin{remark}\label{remark_object0}
\rev{Without loss of generality, the case of no payload can be treated as one of the $p$ examples of heterogeneous payloads. Indeed, the absence of a payload corresponds to the dynamic parameter vector ${}^0\pi_\textnormal{o}=\mathbf{0} = [0 \ldots 0]^\top\in\mathbb{R}^r$. Given Definition~\ref{def:heterogeneous}, having heterogeneous objects implies having linearly independent vectors ${}^{j}\pi_{\textnormal{o}} - {}^{0}\pi_{\textnormal{o}} = {}^{j}\pi_{\textnormal{o}}, j \ne 0$. This means that the dynamic parameter vectors of the remaining $p-1$ objects are linearly independent. Substituting ${}^0\pi_\textnormal{o}$ into \eqref{eq:matrixPi} yields
\begin{equation}
    \Pi =
    \begin{bmatrix}
        1 & 1 & 1 & \ldots & 1\\
        \mathbf{0} & {}^1\pi_\textnormal{o} & {}^2\pi_\textnormal{o} & \ldots & {}^{(p-1)}\pi_\textnormal{o}
    \end{bmatrix}\, ,
    \label{eq:matrixPi_0}
\end{equation}
which is block upper triangular, with diagonal blocks of dimensions $1\times1$ and $r\times r$. Since the first diagonal block is the nonzero scalar $1$, and the vectors $
{}^1\pi_\textnormal{o},\,
{}^2\pi_\textnormal{o},\,
\dots,\,
{}^{(p-1)}\pi_\textnormal{o}
$
are linearly independent, then the $\Pi$ matrix is full rank. 
}
\end{remark}

\hyperref[goal:g2]{G2} is addressed by the following theorem.
\begin{theorem}\label{thm:goal2}
     Let us consider the system \eqref{eq:linksdynamics}-\eqref{eq:motorsdynamics}, under assumptions \hyperref[hp:sea]{H1}--\hyperref[hp:diagandconstmat]{H2}, and its linear formulation w.r.t. the dynamic parameters in \eqref{eq:hybridreglink}-\eqref{eq:hybridregmotor}.

    Consider a set of $p$ heterogeneous known payloads such that condition \eqref{eq:numberobjects} holds.

    The feedforward generalized control action required to track the desired trajectory $\hat{q}(\cdot): \left[0,\, t_\textnormal{f}\right] \rightarrow \mathbb{R}^n$, when a new payload characterized by the dynamic parameter vector $\pi^{*} \in \mathbb{R}^{r}$ is attached to the manipulator end-effector, is given by:
    \begin{equation}\label{eq:torquewithreg}
        \tau^{*}(t) = T(t) \, \Pi^{-1} \, \begin{bmatrix} 1 \\ \pi^{*} \end{bmatrix}\,, \quad \rev{\forall t \in [0, t_\textnormal{f}]} \,,
    \end{equation}
    where $T(t) \in \mathbb{R}^{n \times p}$ is defined as in \eqref{eq:matrixT} and $\Pi \in \mathbb{R}^{(r+1) \times p}$ is defined as in \eqref{eq:matrixPi}, and $p=r+1$ as in \eqref{eq:numberobjects}.
\end{theorem}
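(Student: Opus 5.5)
The plan is to reuse the linear-in-parameters structure established in the proof of Theorem~\ref{thm:goal1}. For the new payload with parameters $\pi^{*}$, apply \eqref{eq:torqueregobji} to it as well. The regressor blocks $Y_\textnormal{M}$ and $Y_\textnormal{o}$ depend only on the motion variables, and the motion is fixed to $\hat q(\cdot)$ for every payload. Hence the torque that tracks $\hat q$ with the new payload has the form
\begin{equation*}
\tau^{*}(t) = \bar{Y}_\textnormal{o}(t)\begin{bmatrix} 1 \\ \pi^{*}\end{bmatrix}, \quad \forall t \in [0,t_\textnormal{f}],
\end{equation*}
with the \emph{same} matrix $\bar{Y}_\textnormal{o}(t)=\begin{bmatrix} Y_\textnormal{M}\pi_\textnormal{M} & Y_\textnormal{o}\end{bmatrix}$ that appears in \eqref{eq:matrixform}. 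The nominal manipulator parameters $\pi_\textnormal{M}$ do not change with the payload, because the motor dynamics \eqref{eq:motorsdynamics} and the link parameters $\pi_\textnormal{s}$ are load-independent.

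Next, identify $\bar{Y}_\textnormal{o}(t)$ from the data. Since $p=r+1$ and the payloads are heterogeneous, Theorem~\ref{thm:goal1} gives that $\Pi$ is square and invertible. Right-multiplying \eqref{eq:matrixform} by $\Pi^{-1}$ then gives $\bar{Y}_\textnormal{o}(t)=T(t)\Pi^{-1}$ pointwise in $t$. Substituting this into the expression above yields \eqref{eq:torquewithreg}.

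The delicate step is justifying that the motion variables entering the regressor coincide across the example executions and the new one. For the rigid joints this holds because $q=\hat q$. For the elastic joints, however, $\theta(\cdot)$ generally depends on the payload: from \eqref{eq:linksdynamics} one has $K(S_\theta\theta-q)=M\ddot q+C\dot q+G-(I-S_\tau)\tau$, so $\theta$ depends on $\pi_\textnormal{o}$. My approach is to avoid this dependence by working with the summed equation \eqref{eq:hybridregextended}, in which the elastic term cancels.

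Even then, $Y_{S_\theta\ddot\theta}\pi_J$ and $Y_{S_\theta\dot\theta}\pi_D$ remain and involve $\theta$. I will handle them by solving the elastic-joint rows of \eqref{eq:linksdynamics} for $S_\theta\theta$, using H2 (diagonal, constant, invertible $K$ on the elastic indices). This expresses $S_\theta\theta$ as $\hat q$ plus $K^{-1}$ times a quantity linear in $\pi_\textnormal{o}$ along the fixed trajectory. Its first and second derivatives are then also affine in $\pi_\textnormal{o}$ with coefficients depending only on $\hat q$ and its derivatives (assuming $\hat q$ smooth enough). The motor terms are therefore affine in $\pi_\textnormal{o}$. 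After absorbing them, the overall torque along $\hat q$ remains affine in $\pi_\textnormal{o}$, of the form $a(t)+B(t)\pi_\textnormal{o}$, which is exactly $\bar Y_\textnormal{o}(t)\bar\pi_\textnormal{o}$ for a possibly redefined $\bar Y_\textnormal{o}$ that is payload-independent.

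This affinity check, guaranteeing that $\bar{Y}_\textnormal{o}$ is truly common to all payloads, is the main obstacle. Once it is established, the remainder is the linear-algebra inversion described above. Feasibility of $\hat q$, assumed in G2, ensures that the resulting $\tau^{*}$ indeed produces $q=\hat q$ from the matching initial state.
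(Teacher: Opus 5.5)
Your argument is correct, and its final step (invert the square matrix $\Pi$ from Theorem~\ref{thm:goal1}, set $\bar{Y}_\textnormal{o}(t)=T(t)\Pi^{-1}$, multiply by $\bar{\pi}^{*}$) is exactly the paper's proof. The difference is in how you justify that $\bar{Y}_\textnormal{o}$ does not depend on the payload. The paper simply asserts that $\begin{bmatrix} Y_\textnormal{M}\pi_\textnormal{M} & Y_\textnormal{o}\end{bmatrix}$ is common to all payloads, because the regressor blocks depend only on motion variables. You notice instead that $Y_{S_\theta\ddot\theta}$ and $Y_{S_\theta\dot\theta}$ are evaluated along motor trajectories that change with the payload, and you remove $\theta$ through the elastic rows of \eqref{eq:linksdynamics}. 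This is a real gain in rigor for elastic joints. Along $\hat q$ the tracking torque is $a(t)+B(t)\pi_\textnormal{o}$, where on the elastic rows $B=Y_\textnormal{o}+JK^{-1}\ddot{Y}_\textnormal{o}+DK^{-1}\dot{Y}_\textnormal{o}$. So the matrix actually recovered by $T(t)\Pi^{-1}$ is $\begin{bmatrix} a & B\end{bmatrix}$, not $\begin{bmatrix} Y_\textnormal{M}\pi_\textnormal{M} & Y_\textnormal{o}\end{bmatrix}$, and $Y_\textnormal{M}\pi_\textnormal{M}$ is not literally the same torque for every object. The formula \eqref{eq:torquewithreg} still holds because it uses only affinity in $\pi_\textnormal{o}$. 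The paper's shorter route identifies the blocks of $\bar{Y}_\textnormal{o}$ correctly only for rigid joints. When you write it up, make three fixes:
\begin{itemize}
\item drop your first paragraph's claim that the \emph{same} $\begin{bmatrix} Y_\textnormal{M}\pi_\textnormal{M} & Y_\textnormal{o}\end{bmatrix}$ works;
\item state $\hat q\in C^4$ explicitly, since $\ddot\theta$ requires fourth derivatives of $\hat q$;
\item say ``affine'' rather than ``linear'' for $K^{-1}(Y_\textnormal{s}\pi_\textnormal{s}+Y_\textnormal{o}\pi_\textnormal{o})$, with $K^{-1}$ understood as the inverse of the elastic diagonal block.
\end{itemize}
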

\begin{proof}
Given that condition~\eqref{eq:numberobjects} holds, the regressor matrix $\bar{Y}_\textnormal{o}(t) \in \mathbb{R}^{n \times (r+1)}$ in~\eqref{eq:matrixform} can be estimated by inverting matrix $\Pi$ as follows:
    \begin{equation}\label{eq:regestimation}
        \bar{Y}_\textnormal{o}(t) = T(t) \ \Pi^{-1} \,, \quad \rev{\forall t \in [0, t_\textnormal{f}]}.
    \end{equation}
    Combining~\eqref{eq:torqueregobji} and~\eqref{eq:regestimation} yields the formula for computing the torques necessary to track the desired trajectory with any new object.
    Specifically, given a new payload with dynamic parameter vector $\pi^{*} \in \mathbb{R}^{r}$, the feedforward control action is computed as:
    \begin{equation}\label{eq:taustar}
        \tau^{*}(t) = \bar{Y}_\textnormal{o}(t) \ \bar{\pi}^{*}, \quad \rev{\forall t \in [0, t_\textnormal{f}]}\,,
    \end{equation}
    where $\bar{\pi}^{*}=\begin{bmatrix} 1 \\ \pi^{*}\end{bmatrix}$.
    Substituting \eqref{eq:regestimation} into \eqref{eq:taustar} leads to equation \eqref{eq:torquewithreg}, which concludes the proof.
\end{proof}

\begin{remark}
    The results presented in Theorems~\ref{thm:goal1} and~\ref{thm:goal2} hold even when \(p > r+1\).
    In this case, the regressor matrix $\bar{Y}_\textnormal{o}(t)$ in \eqref{eq:matrixform} can be estimated using one of the right-inverse matrices of $\Pi$. For instance, one can use the Moore–Penrose pseudo-inverse $\Pi^{+} = \Pi^\top(\Pi\;\Pi^\top)^{-1}$, which solves the least squares problem. This leads to
    \begin{equation}\label{eq:pinv}
        \bar{Y}_\textnormal{o}(t) = T(t) \, \Pi^{+}\,, \quad \rev{\forall t \in [0, t_\textnormal{f}]}.
    \end{equation}
    Then, the generalized torques for any new object with parameters vector $\pi^{*}$ can be computed by combining \eqref{eq:taustar} and \eqref{eq:pinv}.
\end{remark}

%%%%%%%%%%%%%%%%%%%%%%%%%%%%%%%%%%%%%%%%%%%%%%%%%%%%%%%%%%%%%%%%%%%%%%%%%%%%%%%%%%%%%%%%%%%%%%%%%%%%%%%%%%%%%%%%%%%%%%%%%%%%%%%
%%%%%%%%%%%%%%%     Sensitivity Analysis      %%%%%%%%%%%%%%%
%%%%%%%%%%%%%%%%%%%%%%%%%%%%%%%%%%%%%%%%%%%%%%%%%%%%%%%%%%%%%%%%%%%%%%%%%%%%%%%%%%%%%%%%%%%%%%%%%%%%%%%%%%%%%%%%%%%%%%%%%%%%%%%
\paragraph*{Sensitivity analysis.} 
The proposed generalization method relies on the inversion of the parameters matrix $\Pi$ defined in~\eqref{eq:matrixPi} and on the collection of example torques in $T(t)$ defined in~\eqref{eq:matrixT}. 
To analyze how uncertainties and noise propagate to the generalized torques in \eqref{eq:torquewithreg}, the following perturbed matrices are introduced:
\begin{equation}
    \hat{T}(t) = T(t) + \Delta T(t) \,,
    \qquad 
    \hat{\Pi} = \Pi + \Delta \Pi \,,
\end{equation}
where $\Delta T(t)$ models perturbations due to sensor noise, control inaccuracies, and unmodeled dynamics in the torque measurements, while $\Delta \Pi$ represents uncertainties in the payload parameters arising, for instance, from manufacturing tolerances.
Using the first-order perturbation approximation of matrix inversion~\citep{golub1996van}, $\hat{\Pi}^{-1} \approx \Pi^{-1} - \Pi^{-1} \Delta \Pi \, \Pi^{-1}$, for $\| \Pi^{-1} \Delta \Pi \| < 1$, and neglecting higher-order terms, the resulting generalized torque perturbation can be approximated as

\begin{align}\label{eq:sensanalysis}
        \|\Delta\tau^*(t)\| \lesssim &\|\Delta T(t)\| \|\Pi^{-1}\| \| \bar{\pi}^*\| \nonumber \\&+ \|T(t)\| \|\Pi^{-1}\|^2 \|\Delta \Pi\| \| \bar{\pi}^*\| \,.
\end{align}

The expression derived in \eqref{eq:sensanalysis} highlights a distinct sensitivity of the method with respect to different uncertainty sources. Indeed, while the perturbation in torque measurements $\Delta T(t)$ propagates linearly through the norm of the inverse parameters matrix $\|\Pi^{-1}\|$, the uncertainty in payload parameters $\Delta \Pi$ depends on it quadratically.
This emphasizes the importance of Remark~\ref{remark1}: selecting heterogeneous example payloads is crucial to minimize the norm $\|\Pi^{-1}\|$, thereby ensuring the overall robustness of the estimated regressor $\bar{Y}_\textnormal{o}$.

%%%%%%%%%%%%%%%%%%%%%%%%%%%%%%%%%%%%%%%%%%%%%%%%%%%%%%%%%%%%%%%%%%%%%%%%%%%
%% Validation
%%%%%%%%%%%%%%%%%%%%%%%%%%%%%%%%%%%%%%%%%%%%%%%%%%%%%%%%%%%%%%%%%%%%%%%%%%%
\section{Validation}
\label{sec:validation}
This section presents the simulation and experimental results to validate the proposed method. 
Simulation tests are conducted on two systems: a 3-DoF soft articulated manipulator with each joint actuated by SEAs (Section~\ref{subsec: simressoft}), and the Franka Emika Panda (Section~\ref{subsec:simresrigid}).
In addition, experiments are performed using the Panda manipulator (Section~\ref{subsec:exprigid}) and the Panda robot equipped with a soft wrist system (Section~\ref{subsec:exprigidsoft}) powered by a qbmove Advanced actuator~\citep{mengacci2021overcoming}.
\rev{To demonstrate that the proposed method is independent of the controller used to collect the example torques, it is evaluated under two distinct control paradigms.
For the soft articulated manipulator, a model-based computed torque controller is employed; in contrast, torques for the Panda robot and soft wrist are computed via model-free ILC.}

\paragraph*{Object description.}
The payload is modeled as a link attached to the manipulator's end-effector.
Therefore, all payload parameters are expressed w.r.t. the end-effector frame.
The same dynamic parameters used for the robot’s links are employed to represent the payload parameters vector, including mass, center of mass, and inertia~\citep{Siciliano}, i.e.,
\begin{equation}\label{eq:objdynparam}
    \begin{aligned}
        \pi_\textnormal{o} = \big[ & m_\textnormal{o}, \, m_\textnormal{o} l_{c_{ox}}, \, m_\textnormal{o} l_{c_{oy}}, \, m_\textnormal{o} l_{c_{oz}},\\
        & \hspace{1em} \hat{I}_{oxx}, \, \hat{I}_{oxy}, \, \hat{I}_{oxz}, \, \hat{I}_{oyy}, \, \hat{I}_{oyz}, \, \hat{I}_{ozz} \big]^\top
    \end{aligned}
\end{equation}
where $\pi_\textnormal{o}\in\mathbb{R}^{10}$ and $m_o l_{c_ox}$, $m_o l_{c_oy}$ and $m_o l_{c_oz}$ are the object's first-order moments w.r.t. $x,\, y$ and $z$ axes, respectively. 
The terms $\hat{I}_{oij}$ represent the components of the object's second-order moment of inertia.
For the Franka Emika Panda robot, a set of cylindrical and parallelepiped payloads is considered.
A vertical grasping pose is assumed to ensure stability near the \SI{3}{kg} payload limit, 
\rev{with the object dimensions bounded by the gripper opening, i.e., \SI{80}{mm} for the Franka Hand in simulation, and \SI{45}{mm} for the soft claw used in the experiments.
Moreover, the object's principal axes are aligned with the end-effector frame, and its center of mass lies on the $z$ axis of the same frame.}
Consequently, \revii{the first-order moments along $x$ and $y$ and} the products of inertia are intrinsically null, allowing $\pi_\textnormal{o}$ to be reduced to a 5-element vector, i.e.,
\begin{equation}\label{eq:param_small}
    {}^i\pi_{\textnormal{o}} = \left[m_{i} \,, m_{i} l_{c_{iz}} \,, \hat{I}_{ixx} \,, \hat{I}_{iyy} \,, \hat{I}_{izz} \right]^\top,
\end{equation}
for a given object $i$. 
\revii{The discarded parameters do not contribute to the dynamics in this configuration, hence~\eqref{eq:param_small} is exact and avoids the ill-conditioning of $\Pi$.} 
\revii{Recalling Theorem~\ref{thm:goal1}, the dimension of the parameter vector determines the number of example objects.
The full 10-dimensional case~\eqref{eq:objdynparam} requires 11 objects, resulting in a matrix $\Pi \in \mathbb{R}^{11 \times 11}$ of the form~\eqref{eq:matrixPi}, whereas the reduced 5-dimensional case~\eqref{eq:param_small} requires 6 objects, yielding $\Pi \in \mathbb{R}^{6 \times 6}$.}
\revii{In the following, each payload is described either by~\eqref{eq:objdynparam} or by~\eqref{eq:param_small}, depending on the specific object set employed and on the scenario being addressed.}
\revii{In all cases, the condition number of $\Pi$ is reported to assess the numerical stability of the inversion procedure.}

\paragraph*{Evaluation metric.}
The Root Mean Square Error (RMSE) is employed to evaluate the tracking performance at the joints \[\text{RMSE} = \sqrt{\frac{1}{n} \sum_{i=1}^{n} \left( \hat{q}_i - q_i \right)^2}. \]

\paragraph*{\rev{Implementation.}}

\rev{Although the proposed method is derived in continuous time, its practical implementation in discrete time is straightforward and introduces no modification to the algorithm. Indeed, the proposed formulation is evaluated pointwise in time, and therefore the same algebraic expressions are simply computed at each controller sampling instant. Consequently, the regressor estimate and the generalized feedforward torques are evaluated at the controller update rate, which in the present implementation is \SI{1}{\kilo\hertz}. Both in simulation and experiments, joint position and velocity are directly measured by the robot. Joint acceleration, which is not directly available, is reconstructed by applying central finite differences to the measured velocity after low-pass filtering to reduce noise amplification. The only approximation introduced by the digital implementation concerns numerical differentiation, while the proposed payload generalization method itself is implemented exactly as derived in continuous time.}

%%%%%%%%%%%%%%%%%%%%%%%%%%%%%%%%%%%%%%%%%%%%%%%%%%%%%%%%%%%%%%%%%%%%%%%%%%%%%
\subsection{Simulation results: Soft articulated robot}\label{subsec: simressoft}
A 3-DoF soft articulated manipulator, whose structure is shown in Fig. \ref{fig:3Rsoft}, is simulated using MATLAB.
\begin{figure}%[b]
    \centering
    \includegraphics[width=0.55\linewidth]{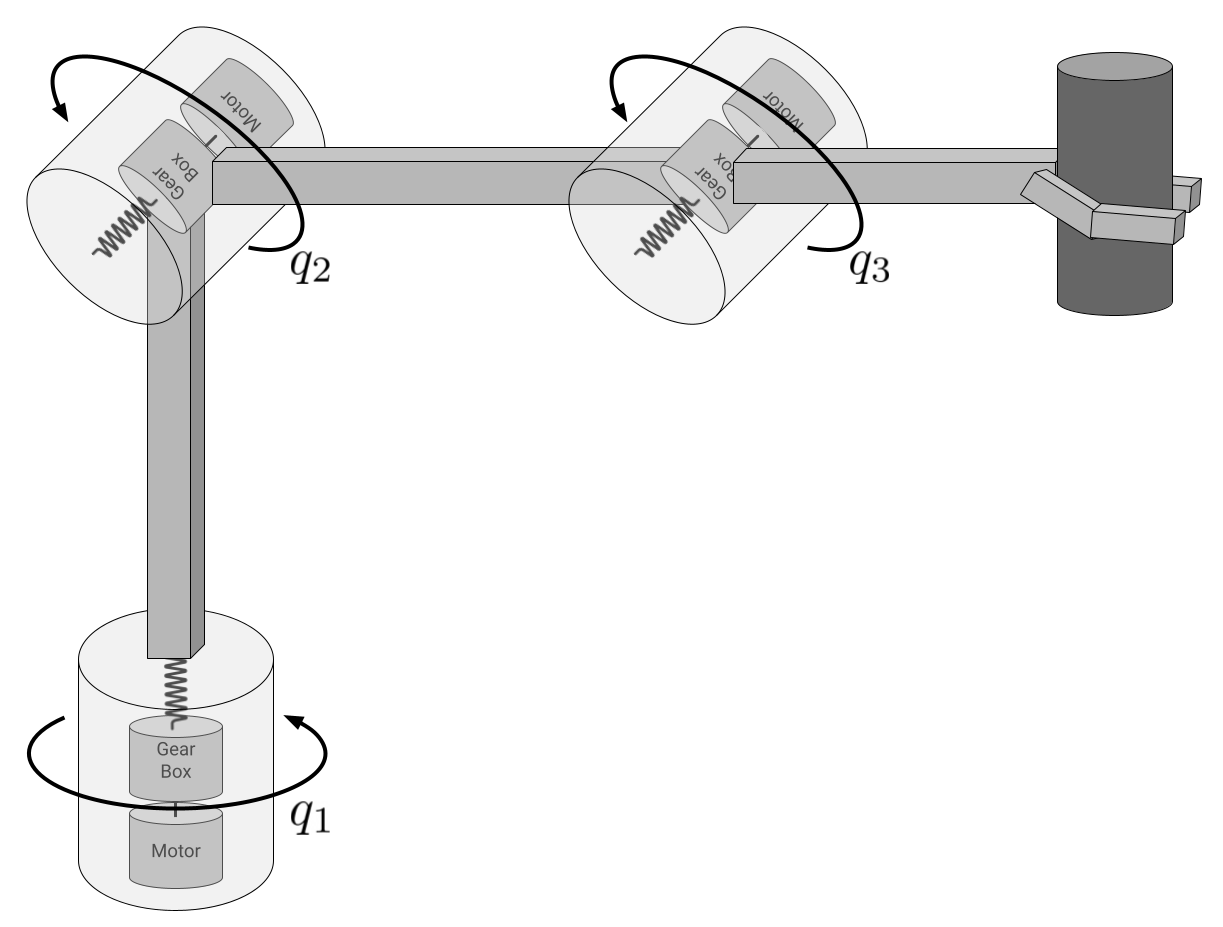}
    \caption{Schematic representation of the simulated 3-DoF soft articulated manipulator, actuated with SEAs.}
    \label{fig:3Rsoft}
\end{figure}
\revii{With reference to the model described in Section~\ref{subsec:system_model}, all the joints are elastic, i.e., $n=m=3$ and $\mathcal{E} = \{1, 2, 3\}$. 
Therefore, no column of $I$ is discarded in~\eqref{eq:stheta} and all the entries $s_i$ in~\eqref{eq:stau} are equal to one, hence $S_\theta = S_\tau = I_3$, the matrices $J$, $D$, and $K$ have full rank, and~\eqref{eq:linksdynamics}-\eqref{eq:motorsdynamics} reduce to the standard model of robots with elastic joints~\citep{Spong_1987}. 
}

\revii{Two studies are carried out on this system: the first addresses complex, eccentric, and arbitrarily oriented objects through the full ten-parameter vector~\eqref{eq:objdynparam}, whereas the second considers objects aligned with the end-effector frame, i.e., with the principal axes parallel to its axes and the center of mass on $z$, so that the reduced five-parameter vector~\eqref{eq:param_small} describes them exactly.
The control law and the desired trajectory described in the following are the same in both studies.}
The desired joint torques are computed using a pure feedforward computed torque controller \eqref{eq:torquesimsoft} to track the reference trajectory with \revii{the example objects of each study}.
This controller allows the proposed method to be evaluated in the presence of ideal tracking performance. 

The control law is defined as follows. 
The torque vector $\tau$ is expressed in terms of the link position vector $q$ and its time derivatives up to the fourth order~\citep{de2000feedforward}.
\rev{In the following, the first and second time derivatives of $q$ are denoted by $\dot{q}$ and $\ddot{q}$, while the superscripts $q^{(3)}$ and $q^{(4)}$ denote the third and fourth time derivatives, respectively.}
To simplify the notation, $N(q,\dot{q})\triangleq C(q,\dot{q})\dot{q}+G(q)$ is defined.
From~\eqref{eq:linksdynamics}, the expression of $\theta$ is derived and then differentiated w.r.t. time, leading to
\begin{equation}\label{eq:theta}
    \theta = q + K^{-1} (M(q)\ddot{q} + N(q,\dot{q}))\,,
\end{equation}
\begin{equation}\label{eq:dottheta}
    \dot{\theta} = \dot{q} + K^{-1} (M(q)\rev{q^{(3)}} + \dot{M}(q)\ddot{q} + \dot{N}(q,\dot{q}))\,,
\end{equation}
\begin{equation}\label{eq:ddottheta}
    \ddot{\theta} = \ddot{q} + K^{-1} (M(q)\rev{q^{(4)}} + 2\dot{M}(q)\rev{q^{(3)} +} \ddot{M}(q)\ddot{q} + \ddot{N}(q,\dot{q})) \,.
\end{equation}
Substituting~\eqref{eq:theta}--\eqref{eq:ddottheta} into the motor dynamics~\eqref{eq:motorsdynamics} yields the torque vector $\tau$ in terms of the configuration vector $q$ and its derivatives, i.e.,
\begin{align}\label{eq:torquesimsoft}
    \tau = &JK^{-1} \left( M(q) \rev{q^{(4)}} + 2\dot{M}(q) \rev{q^{(3)}} + (K+\ddot{M}(q))\ddot{q}\right) \nonumber \\ &+ JK^{-1} \left( \ddot{N}(q,\dot{q}) \right) 
    + DK^{-1} M(q) \rev{q^{(3)}} \nonumber \\ &+ DK^{-1} \left(\dot{M}(q) \ddot{q} + K \dot{q} + \dot{N}(q,\dot{q}) \right) \nonumber \\
    &+ M(q) \ddot{q} + N(q,\dot{q})\,,
\end{align}
which can be used to compute the desired torque using the desired trajectory and its derivatives.
\rev{Specifically, evaluating~\eqref{eq:torquesimsoft} requires this trajectory to be differentiable up to the fourth order, which is the only condition it must satisfy.}
\rev{As~\eqref{eq:torquesimsoft} shows, this control law requires the dynamics matrices of the system, and is therefore model-based. 
The payload generalization method of Section~\ref{sec:problemsolution}, instead, remains model-free, as it uses the example torques only as data, regardless of the controller that generated them. 
Hence, collecting the examples with a model-based controller does not affect the model-free nature of the generalization; when the system model is unavailable, a model-free controller can be used.}
The desired trajectory is
\begin{equation}
    \hat{q}_i(t) = \sum_{j=1}^{6} \left(\frac{c_j}{i}    \left(\frac{t}{t_\textnormal{f}}\right)^j\right)\,, \, \text{for } i = 1, \dots, 3.
\end{equation}
where $i$ denotes the manipulator joint index and $j$ is the summation index corresponding to the polynomial terms of the trajectory, each associated with a coefficient $c_j$.
The coefficients are $c_1=0$, $c_2 = 56.6893$, $c_3 = -305.0427$, $c_4 = 574.9919$, $c_5 = -461.6132$, $c_6 = 134.9746$ and $t_\textnormal{f} = \SI{10}{s}$. 
\revii{In each study, once the desired torques are obtained using~\eqref{eq:torquesimsoft}, the matrix $T$ is defined as in~\eqref{eq:matrixT}.}

\paragraph*{\revii{Full object parameter vector.}}
\revii{
The first study validates the method under the general payload description, in which no assumption is made on the position of the center of mass or on the orientation of the principal axes of each object, so that each payload is described by the full ten-dimensional parameter vector~\eqref{eq:objdynparam}, i.e., $r=10$.
\begin{table}[t]
    \centering
    \caption{\revii{Object parameter values for the full 10-parameter payload description of the 3-DoF soft articulated manipulator. For each object, $\|l_{c_i}\|$ is the distance of the center of mass from the origin of the end-effector frame, and the tilt is the rotation of the object principal axes w.r.t. the same frame.}}
    \label{tab:objectsetsimsoft10}
    \resizebox{\columnwidth}{!}{%
        \begin{tabular}{|c|c|c|c|c|}
            \hline
            \rowcolor{gray!40}
            Object [i]   & shape     & mass [\si{kg}]   & $\|l_{c_i}\|$ [\si{m}]  & tilt [\si{\degree}]  \\
            \hline
            \hline
            0  & --        & 0    & 0.00 & 0.0 \\
            \hline
            1  & box       & 3    & 0.17 & 40.8 \\
            \hline
            2  & cylinder  & 6    & 0.25 & 43.7 \\
            \hline
            3  & tube      & 9    & 0.32 & 72.6 \\
            \hline
            4  & ellipsoid & 12   & 0.21 & 95.2 \\
            \hline
            5  & cone      & 15   & 0.38 & 92.7 \\
            \hline
            6  & tri-prism & 18   & 0.31 & 144.8 \\
            \hline
            7  & L-shape   & 21   & 0.35 & 125.6 \\
            \hline
            8  & box       & 24   & 0.46 & 166.9 \\
            \hline
            9  & cylinder  & 27   & 0.39 & 148.3 \\
            \hline
            10 & ellipsoid & 30   & 0.38 & 131.6 \\
            \hline
            \rowcolor{gray!20}
            11 & tri-prism & 4.5  & 0.58 & 114.4 \\
            \hline
            \rowcolor{gray!20}
            12 & L-shape   & 25.5 & 0.41 & 112.7 \\
            \hline
        \end{tabular}%
        }
\end{table}
Accordingly, the example set comprises $p=11$ objects, while two further objects are used for testing.
These payloads are listed in Table~\ref{tab:objectsetsimsoft10}.
Each object is identified by a different number from 0 to 12, where Object 0 denotes the case in which the manipulator carries no payload, i.e., $^0\tau_\textnormal{o} = 0$ and~\eqref{eq:torqueobji} becomes $^0\tau = \tau_\textnormal{M}$.
In this simulation, Objects 0 to 10 are used to estimate the numerical regressor, while Objects 11 and 12 are used to test the generalized torques.
The objects have different shapes and are eccentric and arbitrarily oriented, so that all first-order moments and all products of inertia are non-zero.
The resulting matrix $\Pi \in \mathbb{R}^{11 \times 11}$ is full rank, with condition number $\kappa(\Pi)=2.2 \cdot 10^3$: the example payloads are thus far from linear dependence in the sense of Remark~\ref{remark1}, and~\eqref{eq:regestimation} can be evaluated reliably.
The method of Section~\ref{sec:problemsolution} is then applied to compute the generalized torques~\eqref{eq:torquewithreg} for the test objects.
\begin{figure}
    \centering
    \includegraphics[width=0.85\linewidth,trim={0 0.3cm 0 0},clip]{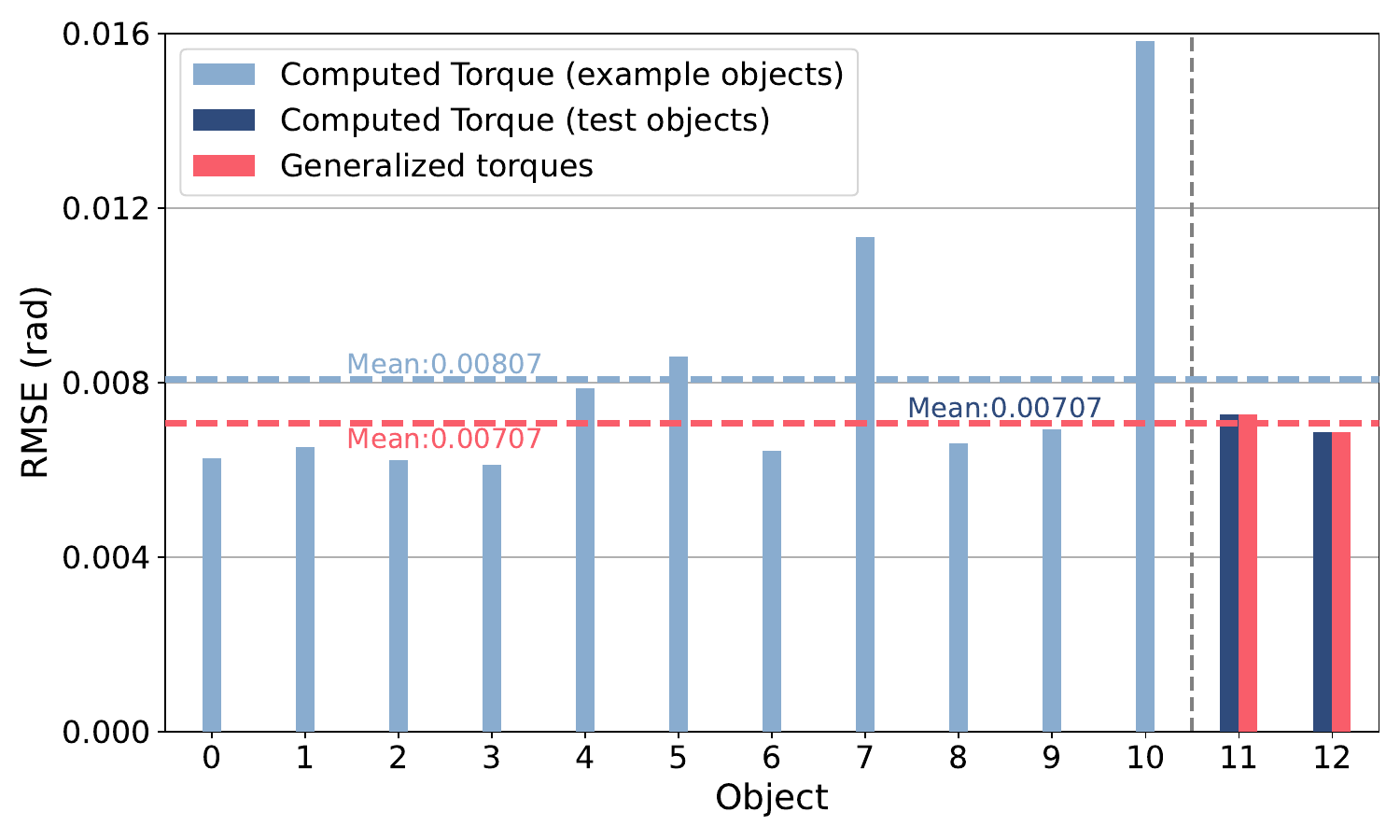}
    \caption{\revii{Simulation 3-DoF soft articulated manipulator with the full 10-parameter payload description: comparison of desired and generalized torques performance.}}
    \label{fig:3Rsoft_rmseoverobj10}
\end{figure}
Fig.~\ref{fig:3Rsoft_rmseoverobj10} compares the global RMSE of the generalized torques (red) with that of the computed torque controller, evaluated on the example objects (light blue) and on the test objects (dark blue).
}

\paragraph*{\revii{Reduced object parameter vector.}}
\revii{
The second study considers payloads whose center of mass lies on the $z$ axis of the end-effector frame and whose principal axes are aligned with the same frame, a configuration for which the reduced object description~\eqref{eq:param_small} is exact, i.e., $r=5$.
In this simulation, the example set reduces to $p=6$ objects, while two further objects are used for testing.
\begin{table}[t]
    \centering
    \caption{Object parameter values \revii{for the reduced 5-parameter payload description} of the 3-DoF soft articulated manipulator.}
    \label{tab:objectsetsimsoft}
    \resizebox{\columnwidth}{!}{%
        \begin{tabular}{|c|c|c|c|c|}
            \hline 
            \rowcolor{gray!40}
            Object [i]   & mass [\si{kg}]    & length [\si{m}]  & width [\si{m}]   & height [\si{m}]  \\
            \hline
            \hline
            0           & 0                 & 0.0                   & 0.0                   & 0.0                 \\
            \hline
            1           & 30                & 0.32                  & 0.16                  & 0.11                \\
            \hline
            2           & 10.82             & 0.15                  & 0.15                  & 0.05                \\
            \hline
            3           & 4.16              & 0.92                  & 0.46                  & 0.31                \\
            \hline
            4           & 22.67             & 0.47                  & 0.16                  & 0.24                \\
            \hline
            5           & 1.59              & 0.63                  & 0.32                  & 0.21                \\
            \hline
            \rowcolor{gray!20}
            6           & 19.35             & 0.29                  & 0.14                  & 0.09                \\
            \hline
            \rowcolor{gray!20}
            7           & 6.21              & 0.84                  & 0.28                  & 0.42                \\
            \hline
        \end{tabular}%
        }
\end{table}
The object set is listed in Table~\ref{tab:objectsetsimsoft}, where Objects 0 to 5 are the examples and Objects 6 and 7 are the test ones.
As in the previous study, Object 0 denotes the absence of payload.
}
Since the simulated objects are modeled as parallelepipeds, their centers of mass and their inertia matrices are computed using standard formulas.
\revii{Also in this study, the heterogeneity of the example payloads guarantees the invertibility of $\Pi \in \mathbb{R}^{6 \times 6}$, whose condition number $\kappa(\Pi)=1.0 \cdot 10^4$ keeps the estimation of the regressor through~\eqref{eq:regestimation} numerically well posed.
\begin{figure}
    \centering
    \includegraphics[width=0.85\linewidth,trim={0 0.3cm 0 0},clip]{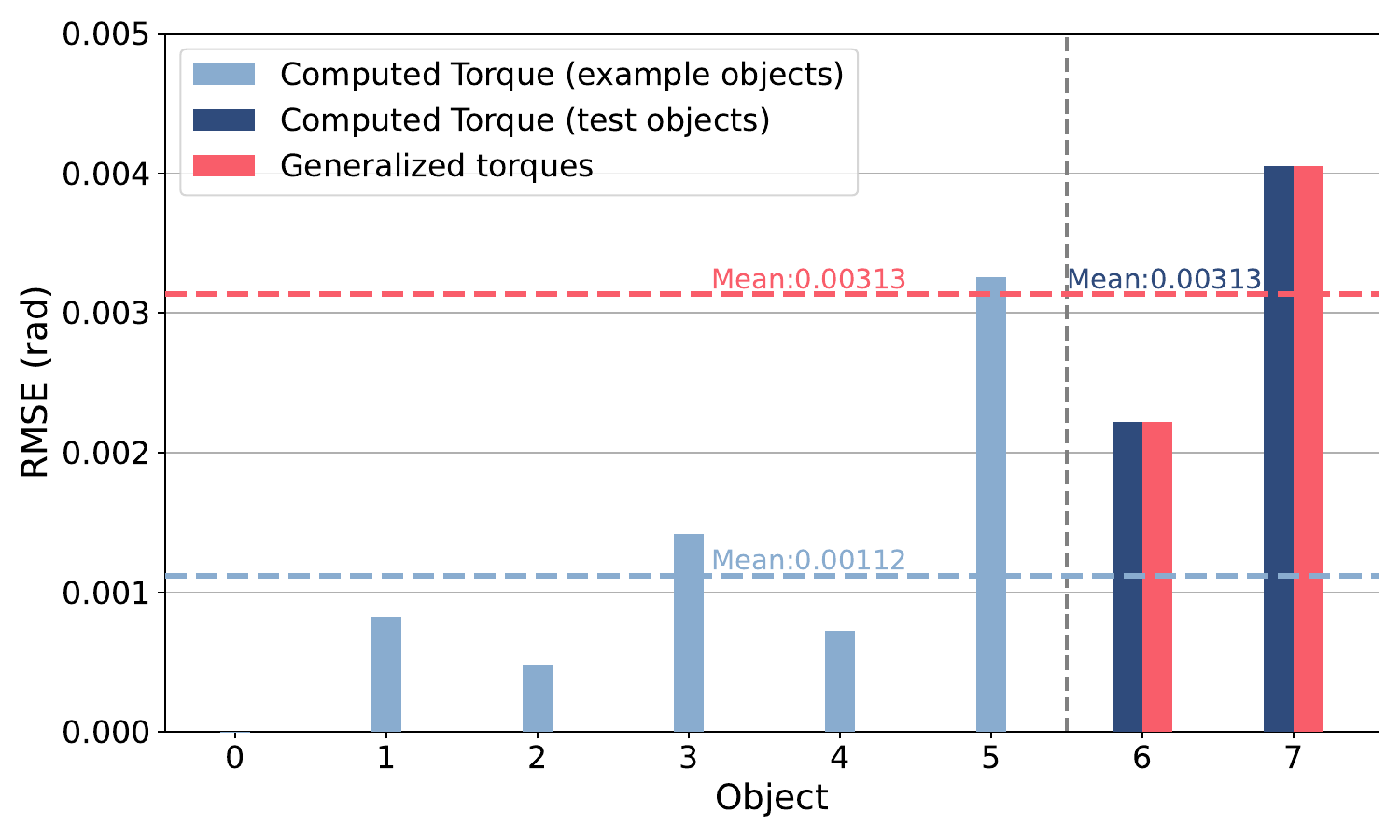}
    \caption{Simulation 3-DoF soft articulated manipulator \revii{with the reduced 5-parameter payload description}: comparison of desired and generalized torques performance.}
    \label{fig:3soft_rmseoverobj}
\end{figure}
The generalized torques~\eqref{eq:torquewithreg} are then obtained for Objects 6 and 7, and compared with the corresponding computed torque solution.
Fig.~\ref{fig:3soft_rmseoverobj} collects the RMSE over the whole object set: the light blue bars refer to the computed torque on the example objects, while the dark blue and the red ones compare the computed torque and the generalized torques on the test objects.
}

\begin{figure*}%[t]
    \centering
    \includegraphics[width=0.9\textwidth,trim={0 0.4cm 0 0},clip]{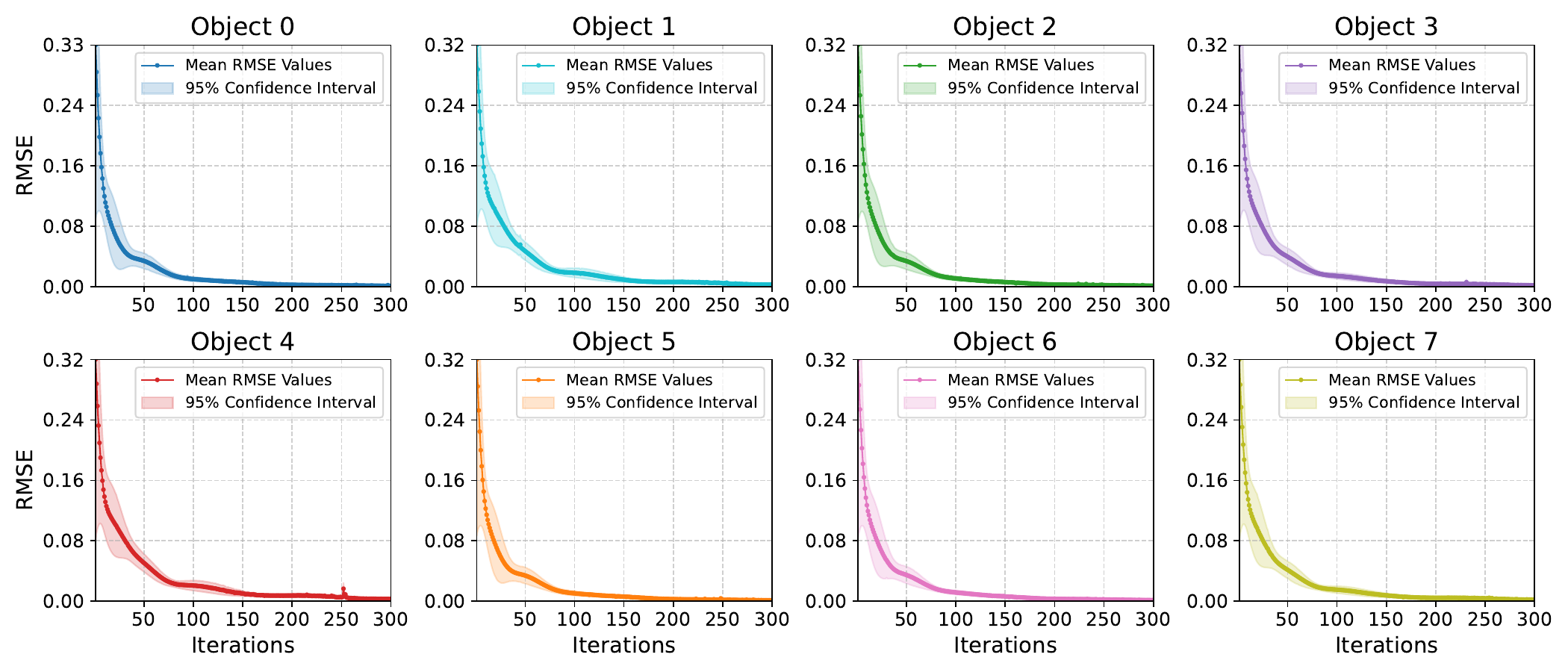}
    \caption{\rev{Simulation of Panda robot: the subplots show the ILC RMSE convergence over 300 iterations for the case of no payload (Object 0) and for all the remaining objects (Object 1-7).}}
    \label{fig:ilcsim_rmseoveriter}
\end{figure*}

\paragraph*{Discussion.}
Simulation results confirm that the proposed generalization method effectively enables tracking with new objects.
\revii{
This conclusion holds for both studies, since the generalized torques attain, on the test objects, the same RMSE as the computed torque controller employed to collect the example torques.
This is the expected outcome since the simulation is ideal and disturbance free, the example torques are exact, and the matrix $\Pi$, in both studies, is well conditioned, so that the regressor is recovered accurately and the generalization introduces no additional error.
This is confirmed by Figs.~\ref{fig:3Rsoft_rmseoverobj10} and~\ref{fig:3soft_rmseoverobj}.
In Fig.~\ref{fig:3Rsoft_rmseoverobj10}, obtained with the full ten-dimensional description, the generalized torques (red) and the computed torque on the test objects (dark blue) reach the same RMSE on Objects 11 and 12, with a mean of 0.00707 against a mean of 0.00807 over the example objects (light blue).
The method is thus successfully validated on eccentric and arbitrarily oriented payloads, for which all the first-order moments and all the products of inertia are non-zero.
The same holds in Fig.~\ref{fig:3soft_rmseoverobj} for the reduced five-dimensional description, where the red and the dark blue bars coincide on Objects 6 and 7, with a mean of 0.00313 against a mean of 0.00112 over the example objects (light blue).
The two simulations cannot be directly compared, as they refer to two different object sets, characterized by different features and, therefore, described by different parameter vectors. 
However, both figures show that the RMSE level depends on the specific payload and on the tracking accuracy of the controller used to collect the examples, and not on the generalization step.
}

These findings indicate that the proposed method successfully extracts the relevant information from the example torques in order to replicate the desired trajectory for any new object.  
It is interesting to point out that \rev{the generalized torques for the new objects are obtained without any knowledge of the system model, with the model-based controller used only to gather the example torques. Moreover, this holds even in the presence of joint elasticity.}

%%%%%%%%%%%%%%%%%%%%%%%%%%%%%%%%%%%%%%%%%%%%%%%%%%%%%%%%%%%%%%%%%%%%%%%%%%%%%
\subsection{Simulation results: Rigid system}\label{subsec:simresrigid}

To further evaluate the effectiveness of the proposed payload generalization method, the Franka Emika Panda is simulated via the Robot Operating System (ROS) in conjunction with the Franka Control Interface (FCI).
\revii{In contrast to the soft articulated case presented in Section~\ref{subsec: simressoft}, all the joints are rigid, i.e., $n = 7$ and $m = 0$ in the model of Section~\ref{subsec:system_model}: the index set $\mathcal{E}$ is empty, hence $S_\theta$ in~\eqref{eq:stheta} has no columns and all the entries $s_i$ in~\eqref{eq:stau} vanish, giving $S_\tau = 0$ together with $J = D = K = 0$. Consequently, the motor dynamics~\eqref{eq:motorsdynamics} is not present, and~\eqref{eq:linksdynamics} reduces to the classical rigid-robot dynamics $M(q)\ddot{q} + C(q, \dot{q})\dot{q} + G(q) = \tau$.}
In this setup, the torques necessary for tracking a desired trajectory with example objects are determined using ILC.
Specifically, a model-free ILC is implemented, which leverages only the torques computed at the previous iteration, together with the measured position, velocity, and acceleration errors, i.e.,
\begin{equation}\label{eq:ilclaw}
    \tau_{k+1}(t) = \tau_k(t) + \Upsilon  (K_\textnormal{P}  e_k(t) + K_\textnormal{V} \dot{e}_k(t) + K_\textnormal{A}  \ddot{e}_k(t))\,,
\end{equation}
where $(\cdot)_k$ indicates the iteration index, $K_\textnormal{P},\, K_\textnormal{V},\,K_\textnormal{A}\in\mathbb{R}^{n\times n}$ are the diagonal positive definite control gain matrices, and $\Upsilon\in\mathbb{R}^{n\times n}$ is the so-called learning gain, whose role is fundamental to ensure convergence \citep{pierallini2023iterative}.

From the starting configuration, i.e., $q(0)=q_\textnormal{0}$ at the initial time, up to the desired configuration $q(t_\textnormal{f})=q_\textnormal{f}$ at the final time $t_\textnormal{f} = \SI{5}{\second}$, the desired trajectory is a minimum jerk, i.e.,

{\small
\begin{equation}
\label{eq:min_jerk}
{
        \hat{q}(t) = q_\textnormal{0} + \left(q_\textnormal{f} - q_\textnormal{0}\right) \left(6 \left(\frac{t}{t_\textnormal{f}}\right)^5 - 15 \left(\frac{t}{t_\textnormal{f}}\right)^4 + 10 \left(\frac{t}{t_\textnormal{f}}\right)^3\right),       
}
\end{equation} 
}
where
$q_\textnormal{0}=\left[-1.25, -0.66, -0.69, -2.17, -0.26, 1.5,  0.063\right]^\top$ $\si{rad}$, and $q_\textnormal{f}=\left[0.31, -0.45, -0.39, -1.8, -0.5, 2.0, -0.3\right]^\top$ \si{rad}.
\rev{The choice of the boundary configurations $q_\textnormal{0}$, $q_\textnormal{f}$, and of the desired trajectory does not affect the validity of the generalization.
Theorems~\ref{thm:goal1}--\ref{thm:goal2} hold for any $\hat{q}(t)$, provided the same trajectory is used to collect the example torques and to compute the generalized ones, ensuring $T(t)$ consistency across objects.
Consequently, the trajectory is determined solely by the ILC scheme requirements. 
The minimum-jerk trajectory~\eqref{eq:min_jerk} is adopted because its smooth profiles enable faster convergence and improved tracking.}
\begin{table}[b]
    \centering
    \caption{Objects parameters values for simulation tests on the Franka Emika Panda manipulator.}
    \label{tab:objectsetsimrigid}
    \resizebox{\columnwidth}{!}{%
        \begin{tabular}{|c|c|c|c|c|}
            \hline 
            \rowcolor{gray!40}
            Object [i]   & mass [\si{kg}]    & length [\si{m}]  & width [\si{m}]   & height [\si{m}]  \\
            \hline
            \hline
            0           & 0                 & 0                 & 0                 & 0                 \\
            \hline
            1           & 2.7               & 0.035                & 0.062                & 0.027                \\
            \hline
            2           & 0.6               & 0.073                & 0.025                & 0.054                \\
            \hline
            3           & 1.8               & 0.036                & 0.071                & 0.043                \\
            \hline
            4           & 3.0               & 0.065                & 0.022                & 0.057                \\
            \hline
            5           & 0.3               & 0.032                & 0.079                & 0.014                \\
            \hline
            6           & 0.9               & 0.047                & 0.068                & 0.023                \\
            \hline
            7           & 2.1               & 0.041                & 0.075                & 0.018                \\
            \hline
        \end{tabular}%
        }
\end{table}
\rev{The objects considered in these simulations are parallelepipeds, whose characteristics are listed in Table~\ref{tab:objectsetsimrigid}, and where Object 0 means no payload.
Each object is described using the five-dimensional parameter vector~\eqref{eq:param_small}.}
\rev{Differently from Section~\ref{subsec: simressoft}, in this section the proposed method is validated using different example-test splits.
Specifically, following the consideration reported at the beginning of Section~\ref{sec:validation}, the objects are split into 6 examples employed for the generalization and 2 test objects. 
This leads to $\binom{8}{2}=28$ possible combinations of the two test objects out of the eight. To refer to these different combinations, in the following the notation $\textnormal{T}i\textnormal{-}j$ is used, indicating that objects $i$ and $j$ are used as test objects while the remaining six form the example set.}
\revii{Across the 28 combinations, $\Pi$ is always full rank and $\kappa(\Pi)$ never exceeds $7.8 \cdot 10^{6}$, ensuring no example set is ill-conditioned.}

Using the ILC controller~\eqref{eq:ilclaw}, the desired torques to track the desired trajectory~\eqref{eq:min_jerk} with the example objects as payloads are determined.
After tuning, $\Upsilon= 0.1\,\mathbf{I}_n$ is selected, where $\mathbf{I}_n$
is the $n \times n$ identity matrix, and the diagonal elements of the gains matrices are defined in Table~\ref{tab:gainilcsim}.
\begin{table}[t]
    \centering
    \caption{ILC controller gains.}
    \label{tab:gainilcsim}
    \begin{tabular}{|>{\columncolor{gray!20}}c|c|c|c|c|c|c|c|}
        \hline
        Joint   &   1   &   2   &   3   &   4   &   5    &   6    &   7     \\  \hline
        $k_p$     &   10  &   20  &   20  &   10  &   4    &   4    &   4     \\  \hline
        $k_v$     &   1   &   2   &   2   &   1   &   0.4  &   0.4  &   0.4   \\  \hline
        $k_a$     &   0.1 &   0.2 &   0.2 &   0.1 &   0.04 &   0.04 &   0.04  \\  \hline
    \end{tabular}
\end{table}
\rev{The gains are tuned empirically so that a monotone RMSE decrease is preserved across iterations, and kept fixed across all objects and experiments.
The velocity and acceleration gains $K_{\textnormal{V}}$ and $K_{\textnormal{A}}$ are set smaller than the position gain $K_{\textnormal{P}}$, so that the noisier velocity and acceleration error terms contribute less to the update.}
The learning process is stopped after 300 iterations, \rev{once the RMSE plateaus.} 
\rev{The RMSE convergence w.r.t. the eight objects is illustrated in Fig.~\ref{fig:ilcsim_rmseoveriter}.}

Once the ILC torques are computed, \rev{for all example sets $\textnormal{T}i\textnormal{-}j$}, the matrix $T$ is defined as in~\eqref{eq:matrixT}. \rev{Theorem~\ref{thm:goal2} is then applied to derive the generalized torques needed to track the same trajectory with new objects as in~\eqref{eq:torquewithreg}.} 

\rev{The results of this study are reported in Fig.~\ref{fig:rmse_all_comb}.}
\begin{figure*}%[htbp]
    \centering
    \includegraphics[width=0.8\linewidth]{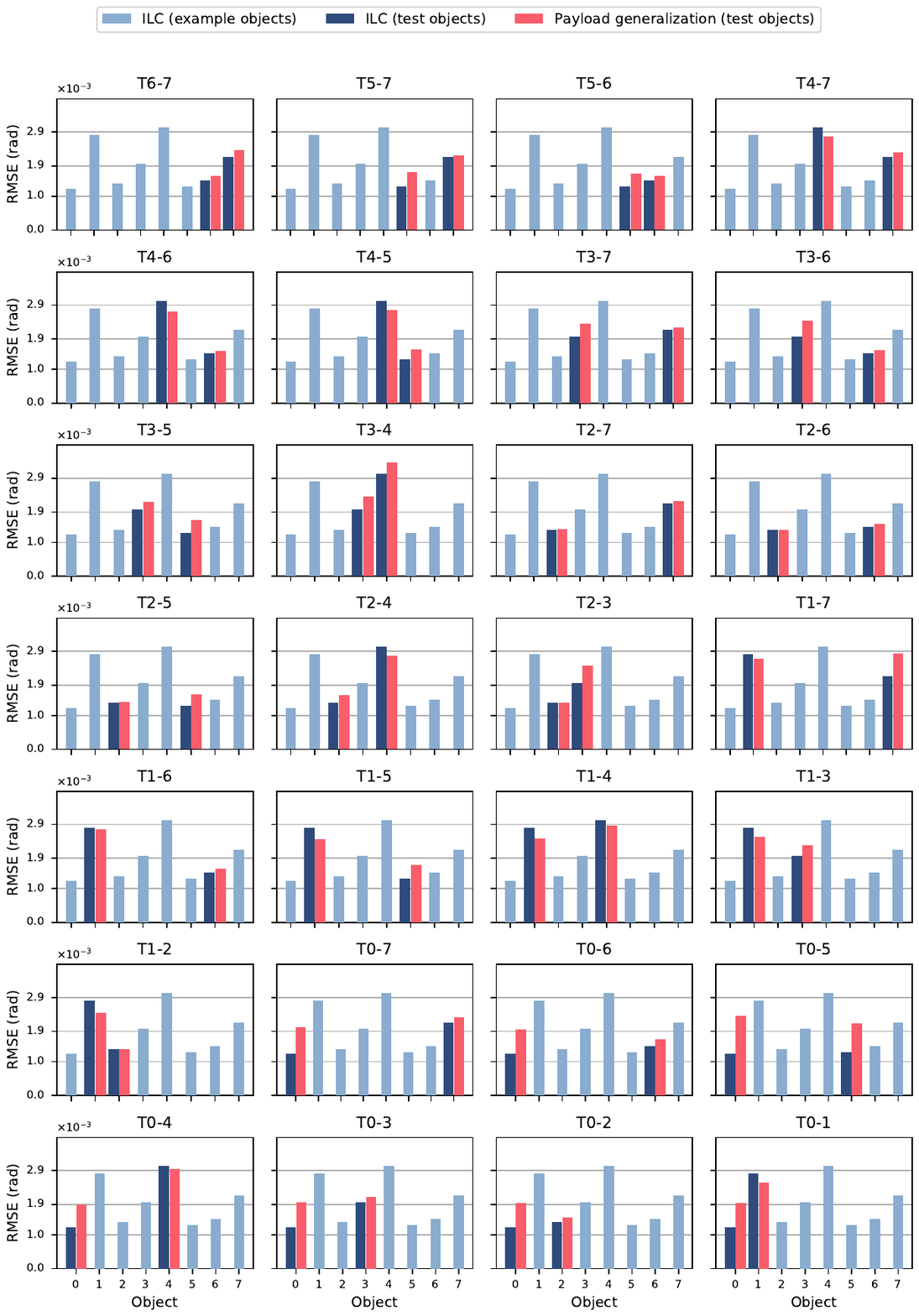}
    \caption{\rev{Tracking RMSE in Panda robot simulation over the $28$ example-test combinations. In each subplot, labeled $\textnormal{T}i\textnormal{-}j$, objects $i$ and $j$ are used as test objects while the remaining six form the example set; the light blue bars report the ILC RMSE on the six example objects, the dark blue bars the ILC RMSE on the two test objects, and the red bars the RMSE of the payload generalization on the same two test objects.}}
    \label{fig:rmse_all_comb}
\end{figure*}
\rev{Here, each subplot refers to one of the possible combinations of the example-test splits $\textnormal{T}i\textnormal{-}j$. 
Within each subplot the horizontal axis lists the eight objects and the vertical axis reports the average tracking RMSE: the light blue bars denote the ILC RMSE on the six example objects, the dark blue bars the ILC RMSE on the two test objects, and the red bars the RMSE of the payload generalization on the same two test objects.}

\rev{Across the 28 combinations, the red bars remain close to the dark blue ones and within the spread of the light blue bars, with no combination departing markedly from this range.
The largest discrepancy between the generalization RMSE and ILC RMSE is observed for the lightest objects 0 and 5, whereas on the heaviest objects 1 and 4 the generalization performs overall better than ILC.} 

\rev{Table~\ref{tab:rmse_std} summarizes the results of all 56 possible test cases obtained from the 28 example-test splits. Specifically, it compares the mean RMSE and standard deviation achieved by the ILC baseline and by the proposed payload generalization method. It is worth noting that the ILC results are identical whether they are used as training examples or as the ground truth.}

\begin{table}[h!]
    \centering
    \caption{\rev{Summary of mean RMSE and Standard Deviation (STD) across all example-test splits.}}
    \label{tab:rmse_std}
    \begin{tabular}{|>{\columncolor{gray!20}}l|c|c|}
        \hline
        \rowcolor{gray!20} & mean RMSE & STD \\  
        \hline
        ILC   &   0.001916   &   0.000658  \\  
        \hline
        Payload generalization     &   0.002108  &   0.000496   \\  
        \hline
    \end{tabular}
\end{table}

\begin{figure*}%[t]
    \centering
    \includegraphics[width=0.8\linewidth,trim={0 0.4cm 0 0},clip]{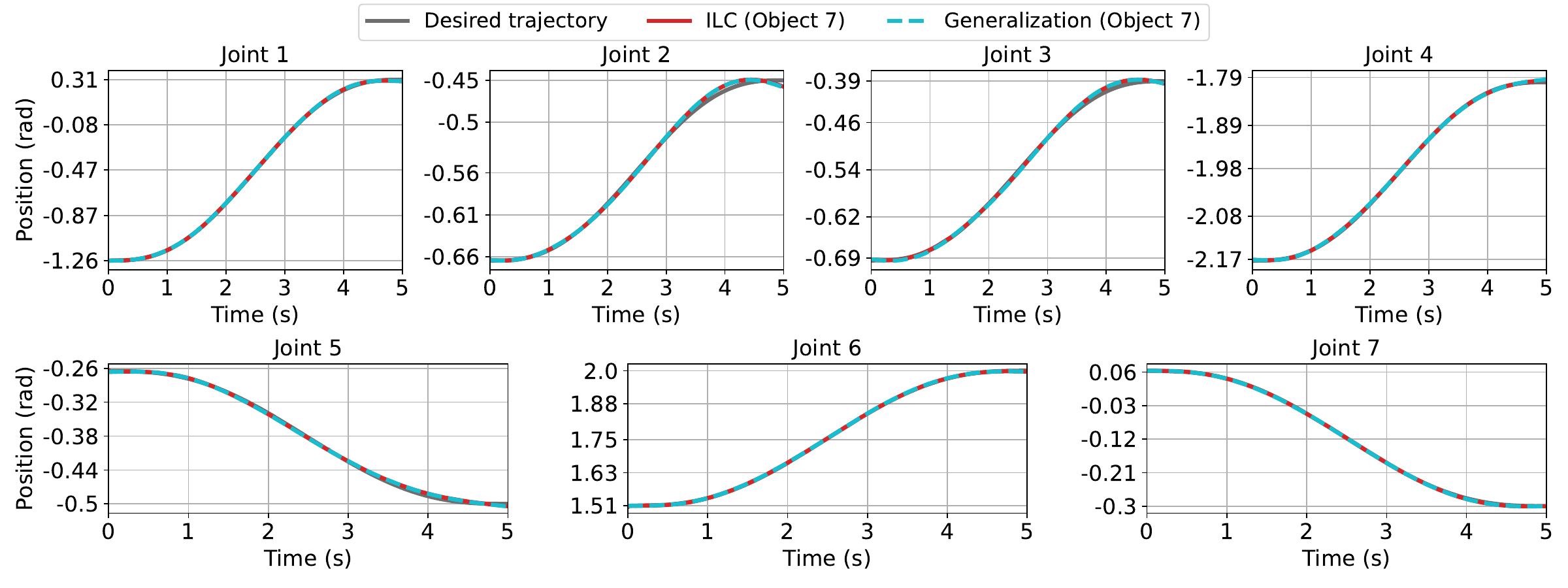}
    \caption{\rev{Simulation of Panda robot: comparison between the desired trajectory and the actual trajectories obtained with ILC and generalized torques on Object 7 for the example-test split T6-7.}}
    \label{fig:ilcsim_meas}
\end{figure*}

\rev{Fig.~\ref{fig:ilcsim_meas} compares the desired trajectory with the tracking performance achieved by ILC and the one achieved by the proposed payload-generalization approach. For the sake of space and readability, only one representative example is reported. 
Specifically, Object 7 in the T6-7 split is reported, where the generalized torques are computed from Objects 0 through 5 and evaluated on the held-out Objects 6 and 7.}

\rev{To quantify how the different splits influence the generalization performance as the object mass varies, the combinations are analyzed through two complementary metrics. 
Let $o \in \{1,\dots,7\}$ denote an object, and let $\mathcal{C}_o$ be the set of the $N_o = |\mathcal{C}_o|$ combinations in which $o$ appears as a test object.
For a given combination $c \in \mathcal{C}_o$ the following ratio is defined
\begin{equation}\label{eq:genstudy_rho}
    \rho = \frac{\sum_{c \in \mathcal{C}_o} \operatorname{RMSE}_{\mathrm{gen}}(o,c)}{\sum_{c \in \mathcal{C}_o} \operatorname{RMSE}_{\mathrm{ILC}}^{\mathrm{ex}}(c)}\,,
\end{equation}
which weights the generalized error on $o$, $\operatorname{RMSE}_{\mathrm{gen}}(o,c)$, against the ILC error on the corresponding example set, $\operatorname{RMSE}_{\mathrm{ILC}}^{\mathrm{ex}}(c)$, so that $\rho=1$ denotes parity.
The second metric is the relative error
\begin{equation}\label{eq:genstudy_delta}
    \delta = \frac{1}{N_o}\sum_{c \in \mathcal{C}_o} \frac{\operatorname{RMSE}_{\mathrm{gen}}(o,c) - \operatorname{RMSE}_{\mathrm{ILC}}(o)}{\operatorname{RMSE}_{\mathrm{ILC}}(o)}\,,
\end{equation}
which measures the deviation of the generalized error on test Object $o$ from the ILC error on that same object, $\operatorname{RMSE}_{\mathrm{ILC}}(o)$, so that $\delta=0$ denotes parity.
}

\begin{figure*}
    \centering
    % Prima sottofigura (a)
    \begin{subfigure}[b]{0.35\textwidth}
        \centering
        \includegraphics[width=\textwidth]{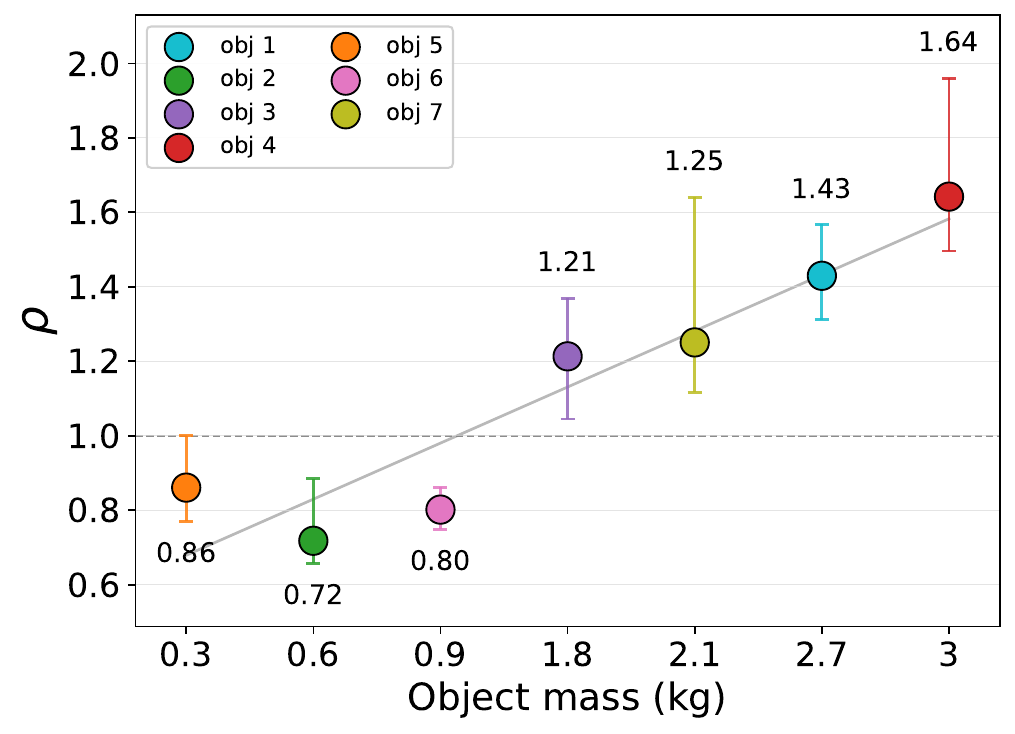}
        \caption{\rev{Ratio $\rho$ \eqref{eq:genstudy_rho}.}}
        \label{fig:genstudy_a}
    \end{subfigure}
    % \hfill
    \hspace{0.2cm}
    % Seconda sottofigura (b)
    \begin{subfigure}[b]{0.35\textwidth}
        \centering
        \includegraphics[width=\textwidth]{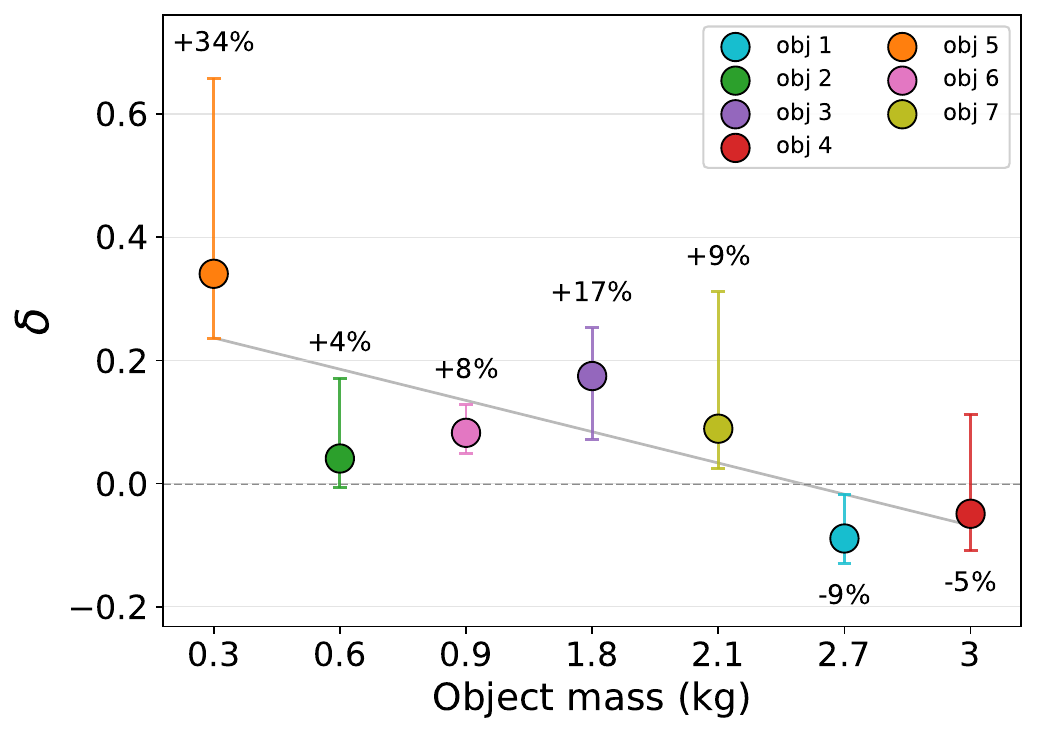}
        \caption{\rev{Relative error $\delta$ \eqref{eq:genstudy_delta}.}}
        \label{fig:genstudy_b}
    \end{subfigure}
    
    \caption{\rev{Generalization performance for the simulated Panda robot over the $28$ example-test combinations, with objects ordered by increasing mass. On the left (a), the ratio $\rho$ compares the generalization against the example-object set. On the right (b), the relative error $\delta$ quantifies the error introduced by the generalization relative to ILC on the same test object. \revii{The vertical bars span the minimum and maximum values attained over the $N_o=7$ splits in which the object belongs to the test set.} The dashed line marks parity ($\rho=1$, $\delta=0$) and the gray line is the linear fit versus mass.}}
    \label{fig:genstudy}
\end{figure*}

\rev{Both metrics are reported in Fig.~\ref{fig:genstudy}, with objects ordered by increasing mass.
\rev{In these results, Object 0 is not considered, as it represents an edge case corresponding to a zero-payload condition.}
In Fig.~\ref{fig:genstudy_a}, the ratio $\rho$ grows with object mass, staying below unity for the lighter objects 5, 2, and 6 and rising above it for the remaining heavier objects. 
In Fig.~\ref{fig:genstudy_b}, the relative error $\delta$ decreases with mass, reaching \revii{on average} negative values for the heaviest objects 1 and 4, and its maximum ($+34\%$) for Object 5.
}

\paragraph*{Discussion.}
Simulation results for the rigid manipulator confirm that the method effectively replicates the ILC tracking performance for novel payloads.
In particular, Fig.~\ref{fig:ilcsim_rmseoveriter} shows how the ILC controller effectively reduces the tracking error over the iterations, \rev{with each object converging to a different RMSE level according to its inertial properties.}
\rev{Building on these learned torques, the generalization is assessed over all 28 example-test splits in Fig.~\ref{fig:rmse_all_comb}: in every combination the generalized torques (red bars) reproduce the ILC performance on the test objects (dark blue bars) and stay within the error band of the example objects (light blue bars).
The results, therefore, do not depend on the specific split.}

\rev{However, two opposite scenarios, both involving objects at the extremes of the mass range, stand out in this figure.
For the lightest objects, 0 and 5, the generalized RMSE exceeds the ILC one. Since their masses are zero or negligible, the payload-dependent term of the regressor estimate, i.e., $Y_{\textnormal{o}}$, becomes ineffective, and the generalized torques reduce essentially to the intrinsic robot dynamics, discarding the load information that would improve tracking.
Conversely, for the heaviest objects, 1 and 4, the generalization outperforms ILC in every split where they act as test objects. 
However, their RMSE stays high in absolute terms, as these objects are the most demanding to track, so that the feedforward torques are predicted accurately yet the residual error remains larger than for the others.}

\rev{In general, the differences between the generalization RMSE and the ILC RMSE can be explained by the way the example torques are produced. 
Differently from the simulated soft articulated manipulator presented in Section~\ref{subsec: simressoft}, where the example torques are exact, being computed from the known model, here the example torques are learned by ILC in a more realistic simulation.
The ILC thus retains a small residual tracking error, which is inherited by the torque matrix $T(t)$, and hence by the generalized torques, explaining the slightly larger error with respect to ILC.
Nonetheless, as illustrated for the representative T6-7 split in Fig.~\ref{fig:ilcsim_meas}, the tracking performance achieved by the payload generalization method remains satisfactory, confirming the effectiveness of the approach.
}

\rev{The two metrics of Fig.~\ref{fig:genstudy} further explain these results in relation to the object masses.
The growth of $\rho$ with mass suggests that the generalization degrades on heavier payloads. However, the opposite trend of $\delta$ shows that, on those same objects, the generalization actually attains the accuracy of ILC.
Hence, although the average generalization RMSE is higher for the heaviest objects, it is nevertheless closer to the corresponding ILC value. 
The two readings are therefore complementary: together, they show that across all 28 splits the generalization remains comparable to ILC, matching or even improving upon it for the heavier objects and keeping the additional error bounded on the lighter ones, thus confirming the effectiveness of the method.
}
\revii{The vertical bars in Fig.~\ref{fig:genstudy} confirm that both trends with mass hold across the splits, the spread being largest for the lightest Object 5, whose $\delta$ ranges from $+23.5\%$ to $+65.8\%$.
The latter is attained in the split T0-5, where both the zero-mass and the lightest object are held out from the example set, consistently with the previous results.}
Notably, this level of performance is attained through feedforward torques, eliminating the requirement for repeated learning phases.
\rev{As a result, the generalization method allows the same trajectory to be tracked with novel objects, keeping the tracking error bounded within the RMSE range attained by ILC across the objects, with only a small gap consistent with the sensitivity analysis in~\eqref{eq:sensanalysis}.}

\rev{As reported in Table~\ref{tab:rmse_std}, the proposed approach introduces only a modest increase of approximately 10\% in the average tracking error with respect to ILC, while exhibiting lower variability across the different combinations. This result indicates that the generalized feedforward actions provide consistently accurate trajectory tracking independently of the specific selection of the example payloads, confirming the robustness and generalization capability of the proposed method. Considering that no learning or optimization is performed for the test payloads, these results demonstrate that the proposed method is able to reproduce, in a zero-shot fashion, the performance of an ILC controller that would otherwise require a dedicated learning phase for each new payload.}

%%%%%%%%%%%%%%%%%%%%%%%%%%%%%%%%%%%%%%%%%%%%%%%%%%%%%%%%%%%%%%%%%%%%%%%%%%%%%
\subsection{Experimental results: Rigid system}\label{subsec:exprigid}

Simulation tests for the rigid system are replicated here to validate the method in a real-world setup, following the steps in Section~\ref{subsec:simresrigid}\revii{, where the robot is again fully rigid, i.e., $n=7$ and $m=0$ in~\eqref{eq:linksdynamics}-\eqref{eq:motorsdynamics}}.
First, the desired torques are derived using ILC for the example objects; then, the generalized torques are computed to test the proposed method on new objects.
\begin{table}[b]
    \centering
    \caption{Parameter values of objects used in experiments with the Franka Emika Panda robot and the Panda robot equipped with a soft wrist.}
    \label{tab:objectsetrigidsys}
    \resizebox{\columnwidth}{!}{ % Resize to fit the column width
    \begin{tabular}{|c|c|c|c|c|c|}
        \hline
        \rowcolor{gray!40} 
        Object [i] & mass [\si{kg}]    & $cm_z$ [\si{mm}]  & $I_{xx}$ [\si{\g\meter\squared}]  & $I_{yy}$ [\si{\g\meter\squared}]  & $I_{zz}$ [\si{\g\meter\squared}]  \\ \hline\hline
        0          & 0                 & 0                 & 0                                 & 0                                 & 0                                 \\  \hline
        1          & 0.142             & 36                & 0.126                             & 0.126                             & 0.126                             \\  \hline
        2          & 1.399             & 74                & 2.65                              & 2.74                              & 0.284                             \\  \hline
        3          & 0.740             & 75                & 1.42                              & 1.42                              & 0.0710                            \\  \hline
        4          & 0.442             & 50                & 0.460                             & 0.559                             & 0.283                             \\  \hline
        5          & 0.942             & 50                & 0.840                             & 0.910                             & 0.180                             \\  \hline
        \rowcolor{gray!20} 
        6          & 0.307             & 49                & 0.258                             & 0.258                             & 0.0204                            \\  \hline
        \rowcolor{gray!20} 
        7          & 0.535             & 98                & 1.86                              & 1.86                              & 0.271                             \\  \hline
        \rowcolor{gray!20} 
        8          & 0.688             & 75                & 1.46                              & 1.46                              & 0.359                             \\  \hline
    \end{tabular}
}
\end{table}

\begin{figure}%[t]
    \centering
    \includegraphics[width=0.3\linewidth]{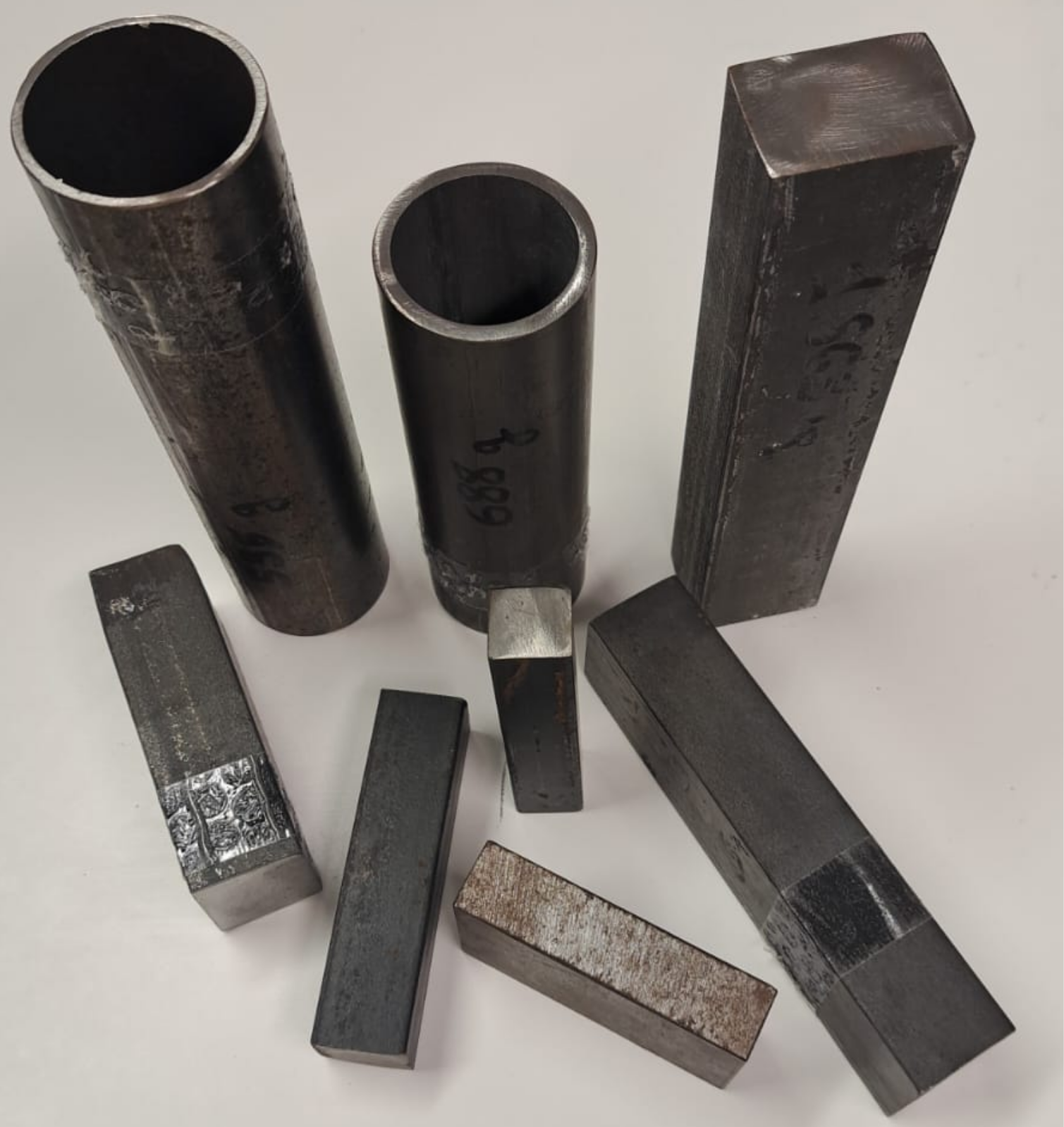}
    \caption{Objects set used for real-world experiments.}
    \label{fig:objects}
\end{figure}

The characteristics of the example objects are listed in Table~\ref{tab:objectsetrigidsys} and correspond to those shown in Fig.~\ref{fig:objects}.
These are regularly shaped objects available in the laboratory, whose simple geometry allows their inertial parameters to be roughly determined with standard measuring instruments and to adopt the reduced five-dimensional parameter vector~\eqref{eq:param_small}.
Objects 0 through 5 are used to estimate the numerical regressor, whereas objects 6, 7, and 8 are used to test the generalization procedure.
\revii{For this object set, $\Pi$ is again full rank, with $\kappa(\Pi)=7.7 \cdot 10^{5}$.}
The trajectory~\eqref{eq:min_jerk} moves the robot in $t_\textnormal{f} =$ \SI{10}{\second} from configuration 
$q_0 = \left[0.13, 0.18,  -0.63, -1.56, 0.13, 1.73, -1.80\right]^\top$ $\si{rad}$ to
$q_\textnormal{f} = \left[1.62, 0.61, -1.42, -2.02, 0.99, 1.15, 0.053\right]^\top$
$\si{rad}$.
The same ILC gains selected for simulation, detailed in Table~\ref{tab:gainilcsim}, are used.
During the experiment, the homing between iterations is performed using a computed torque controller.
The learning process is completed after 150 iterations.
The ILC performance on the six example objects is illustrated in Fig.~\ref{fig:ilcreal_rmseoveriter}.
\begin{figure*}%[t]
    \centering
    \includegraphics[width=0.8\linewidth,trim={0 0.4cm 0 0},clip]{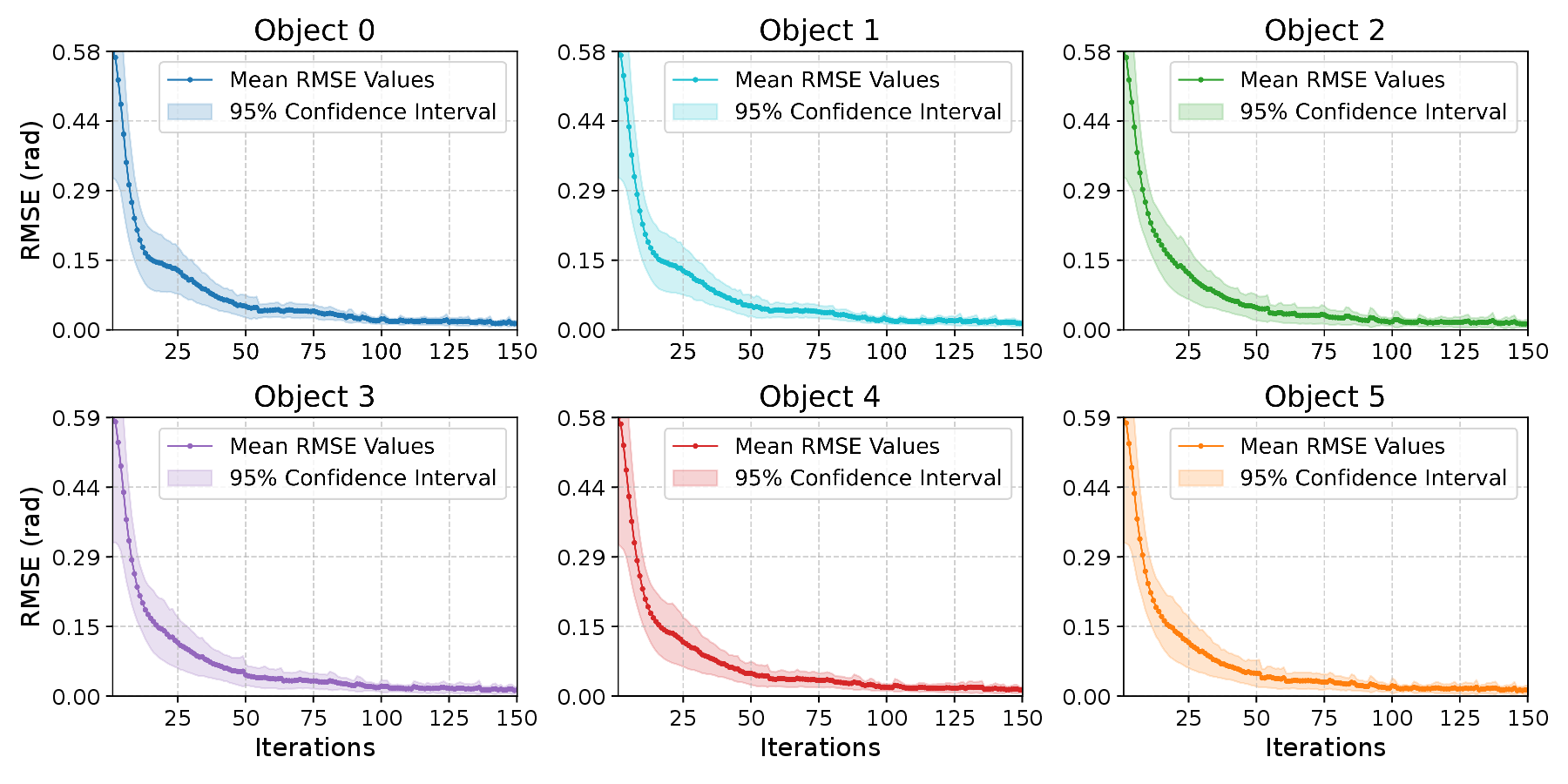}
    \caption{Experiment on real-world Panda robot: the subplots show the ILC RMSE reduction over 150 iterations for all six example objects.}
    \label{fig:ilcreal_rmseoveriter}
\end{figure*}
\begin{figure*}%[t]
    \centering
    \includegraphics[width=0.8\linewidth,trim={0 0.4cm 0 0},clip]{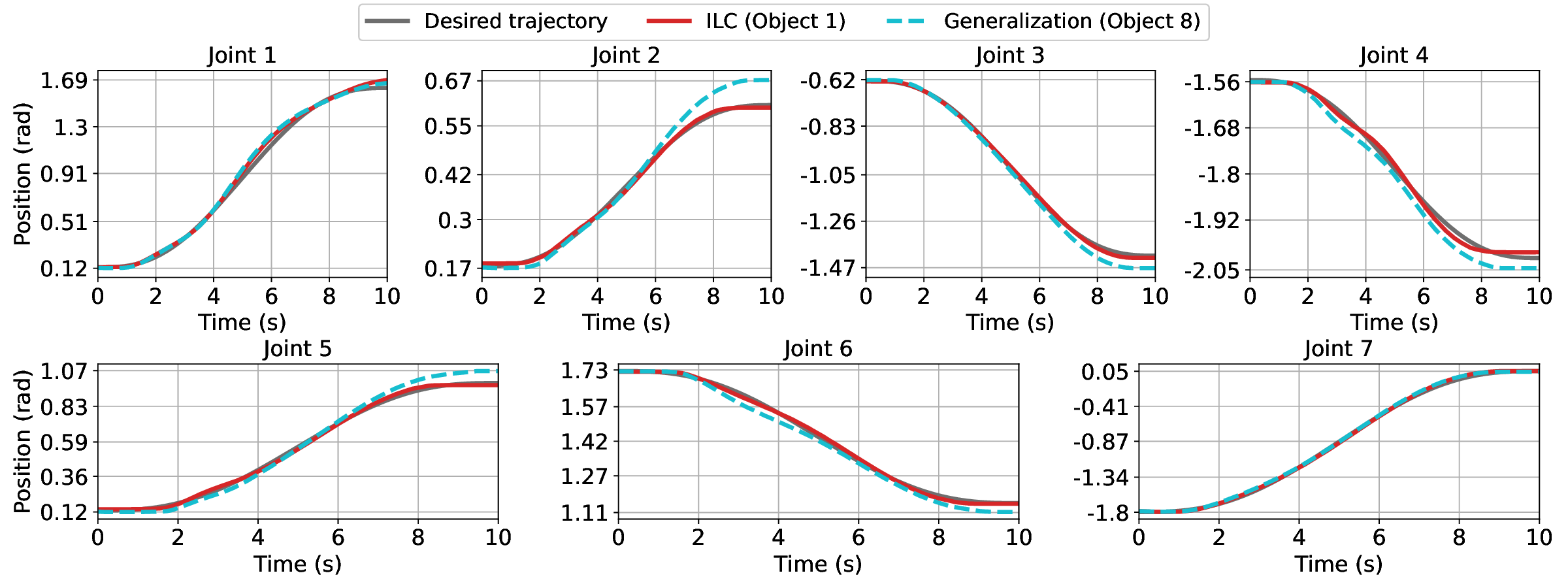}
    \caption{Experiment on real-world Panda robot: comparison between desired trajectory and actual trajectories obtained with ILC and generalized torques.}
    \label{fig:ilcreal_meas}
\end{figure*}
\begin{figure}%[b]
    \centering
    \includegraphics[width=0.85\linewidth,trim={0 0.3cm 0 0},clip]{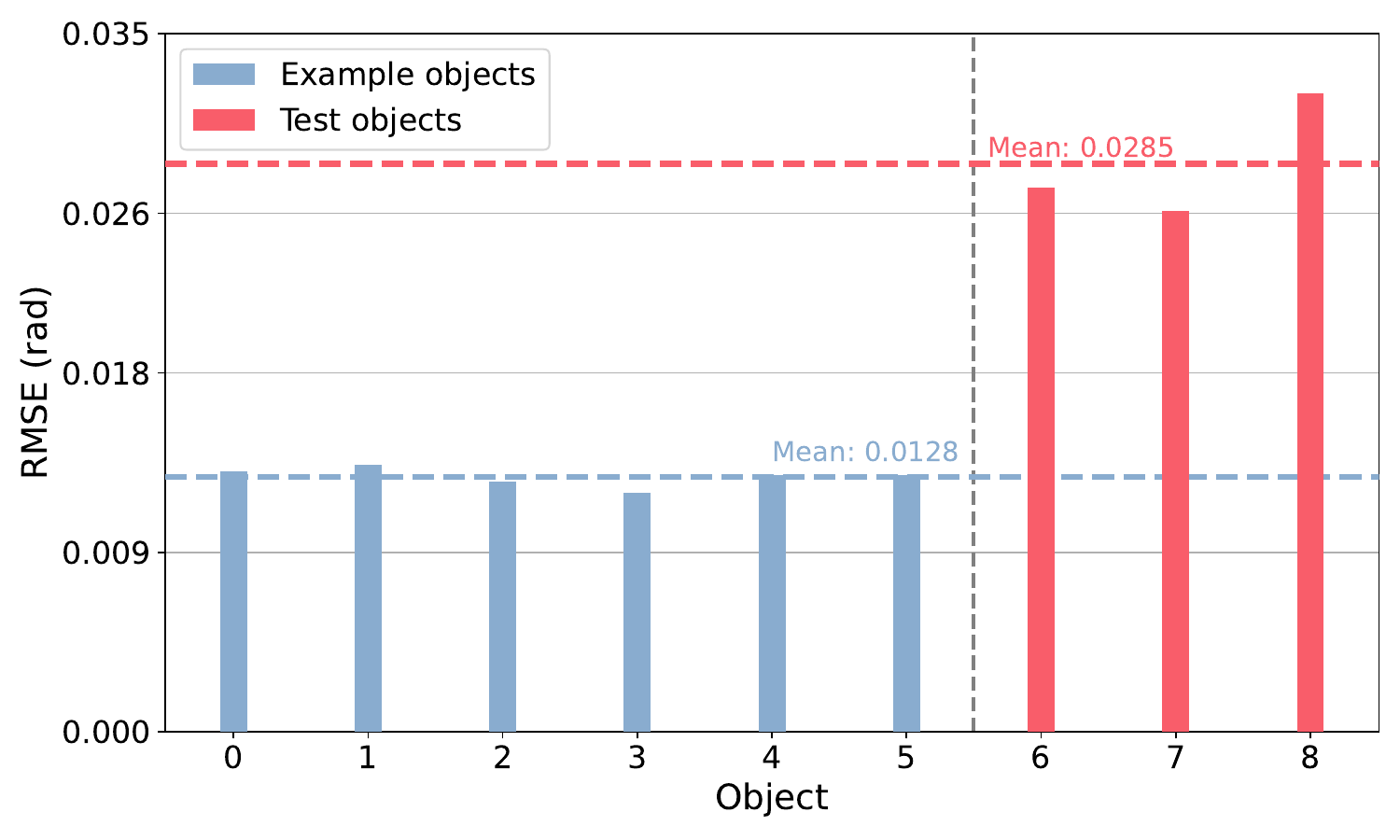}
    \caption{Experiment on real-world Panda robot: comparison of ILC-based and generalized torques performance.}
    \label{fig:ilcreal_rmseoverobj}
\end{figure}
After deriving the desired torques with ILC, the payload generalization procedure is applied to compute the feedforward control actions (as in~\eqref{eq:torquewithreg}) required to track the desired trajectory with test objects.

Fig.~\ref{fig:ilcreal_meas} illustrates the trajectory tracking performance for both ILC and generalization methods, showing the results for Object 1 and Object 8, respectively, compared to the desired trajectory.  Fig.~\ref{fig:ilcreal_rmseoverobj} summarizes the results for all objects.

\paragraph*{Discussion.}
The experiments conducted on the rigid system demonstrate how the generalized torques enable good tracking performance to be achieved even in real-world scenarios.
As with the simulation results, Fig.~\ref{fig:ilcreal_rmseoveriter} shows how the ILC controller effectively reduces position error over iterations, ensuring good trajectory tracking results.
In contrast to the simulations, Fig.~\ref{fig:ilcreal_rmseoverobj} shows that the generalization procedure, compared with the performance of ILC, introduces some additional position error\rev{: the mean RMSE grows from 0.0128 on the ILC example objects to 0.0285 on the generalized test objects}.
\rev{This roughly twofold increase originates from the two uncertainty sources isolated in~\eqref{eq:sensanalysis}: the torque measurement noise of the real actuators, which affects the estimated example-torque matrix $T(t)$, and the residual error in the only approximately known payload parameters, which enters through $\Pi$.
Because the example torques inherently embed the ILC residual and sensor noise, the observed performance gap represents the expected propagation of these errors.}
However, it is noteworthy that the RMSE across all objects remains bounded. \rev{This is confirmed by the joint trajectories in Fig.~\ref{fig:ilcreal_meas}, where both the ILC and the generalized motions follow the desired trajectory throughout the whole task: the deviations are localized in the final portion of the motion and concentrated on the joints undergoing the widest excursion, while the remaining joints are tracked almost exactly. Thus, the error never grows into a loss of tracking.}
Therefore, it can be concluded that trajectory tracking is successfully achieved for all three test objects, even in real-world experiments, indicating the effectiveness of the proposed method in generalizing torques for various payloads.

%%%%%%%%%%%%%%%%%%%%%%%%%%%%%%%%%%%%%%%%%%%%%%%%%%%%%%%%%%%%%%%%%%%%%%%%%%%%%
\subsection{Experimental results: Panda robot and soft wrist}
\label{subsec:exprigidsoft}

\begin{figure}%[hb!]
\begin{minipage}[b]{0.49\columnwidth}
    \centering
    \includegraphics[height=2.6cm,angle=90]{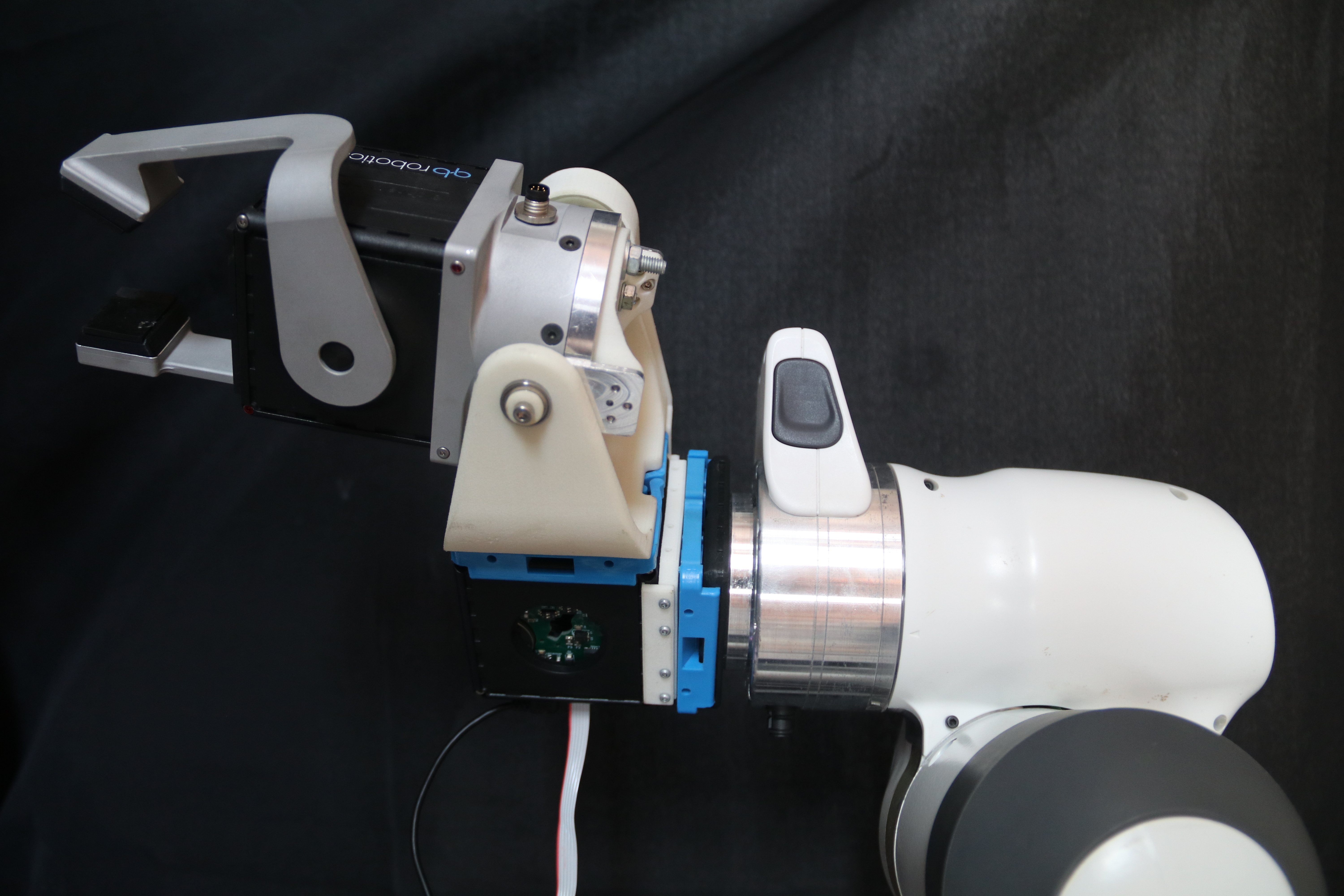}%2.6
\end{minipage}
\begin{minipage}[b]{0.49\columnwidth}
    \centering
    \includegraphics[height=3.9cm]{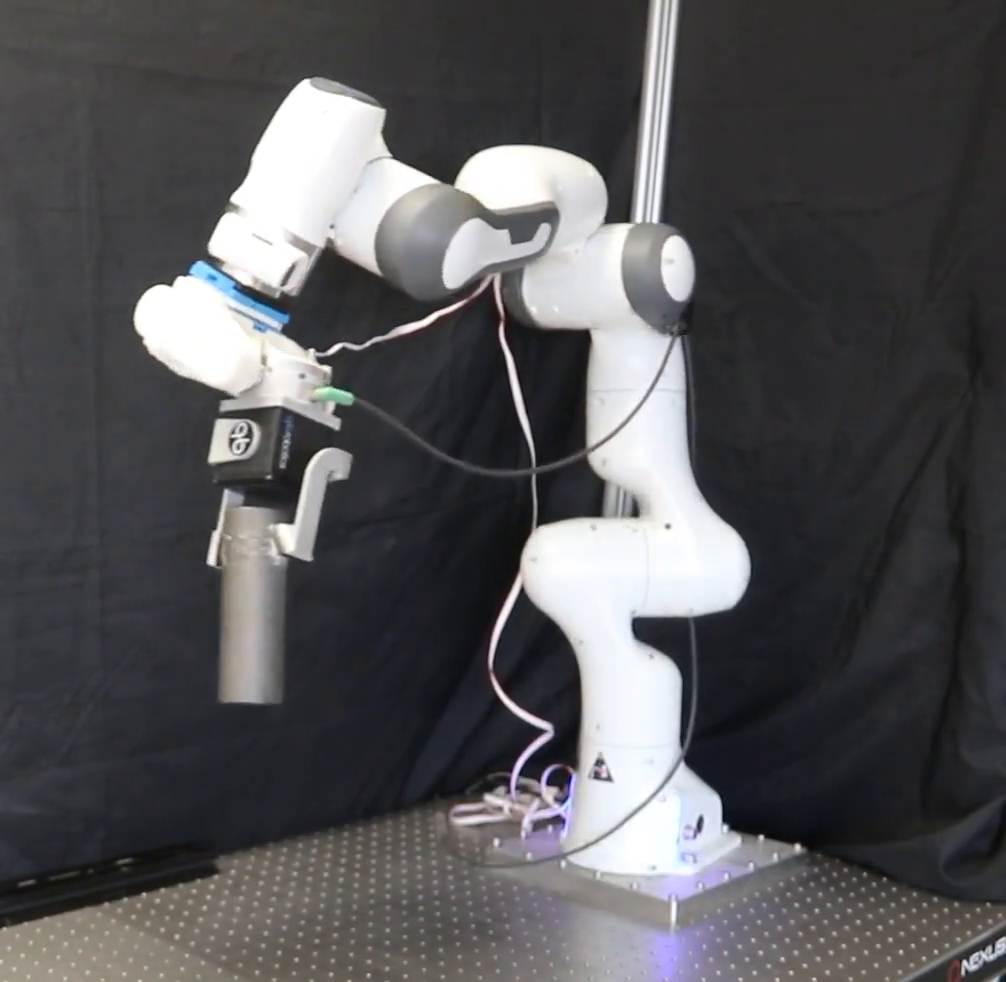}
\end{minipage}
    \caption{
    System setup for the final experiments: the Panda robot is equipped with an elastic wrist. On the left, the soft wrist is connected to the gripper for object grasping. On the right, the system performs trajectory tracking while holding an object.
    }
    \label{fig:softwrist}
\end{figure}

\begin{figure*}%[t]
    \centering
    \includegraphics[width=0.8\linewidth,trim={0 0.4cm 0 0},clip]{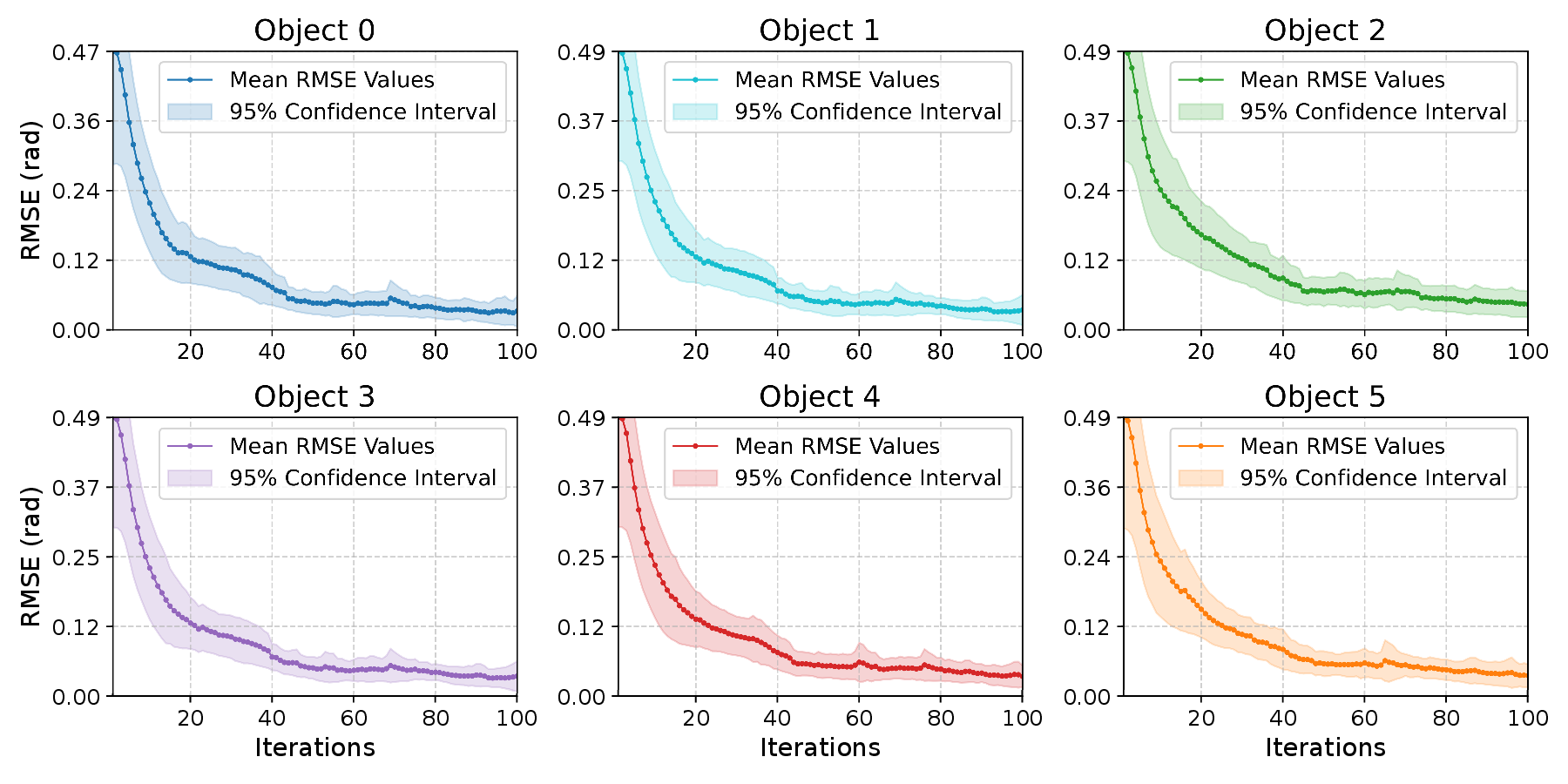}
    \caption{Experiment on Panda manipulator with elastic wrist: the subplots show the ILC RMSE reduction over 100 iterations for all six example objects.}
    \label{fig:ilchyb_RMSEoveriter}
\end{figure*}
\begin{figure*}
    \centering
    \includegraphics[width=0.8\linewidth,trim={0 0.4cm 0 0},clip]{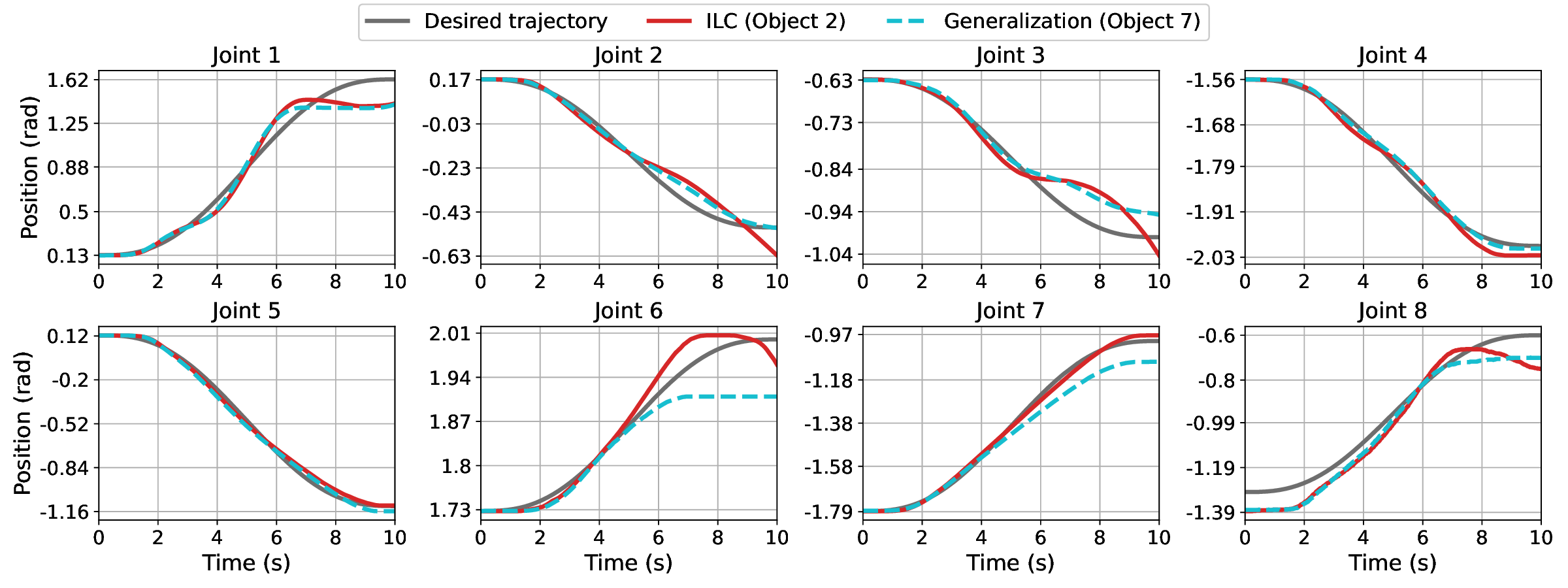}
    \caption{Experiment on Panda manipulator with elastic wrist: comparison between desired trajectory and actual trajectories obtained with ILC and generalized torques.}
    \label{fig:ilchyb_meas}
\end{figure*}

As a final validation, the generalization method is tested on a system consisting of a Panda robot equipped with a variable-stiffness actuator acting as a soft wrist.
The setup is shown in Fig.~\ref{fig:softwrist}.
\revii{This is the general case of the model in Section~\ref{subsec:system_model}, in which rigid and elastic joints coexist: the seven joints of the Panda are rigid and the wrist is elastic, i.e., $n = 8$, $m = 1$, and $\mathcal{E} = \{8\}$, hence $S_\theta = e_8$ in~\eqref{eq:stheta} and $S_\tau = \operatorname{diag}(0, \dots, 0, 1)$ in~\eqref{eq:stau}, while $J$, $D$, and $K$ have rank one.}
The experimental procedure is implemented in ROS, with the Thunder Dynamics tool~\citep{thunderdyn_lib_2025} providing a detailed model of the complex system.
In what follows, the same values of the ILC gains for the Panda robot and the same example and test objects are used (Section~\ref{subsec:exprigid}).
For the soft actuator, the following ILC gains are chosen: $k_{\textnormal{p}_\textnormal{8}} = 0.25$,  $k_{\textnormal{v}_\textnormal{8}} = 0.01$, and $k_{\textnormal{a}_\textnormal{8}} = 0.001$.
The initial and final positions in the minimum jerk trajectory are 
 $q_\textnormal{0}=[0.127, 0.175,  -0.625, -1.555, 0.129,  1.725, -1.798,$
 $-1.300]^\top$ $\si{rad}$ and $q_{\textnormal{f}}=[1.622, -0.547, -1.005,  -2.063,$
 $-1.120, 2.063, -1.003, -0.60]^\top$ $\si{rad}$.
The qbmove Advanced VSA actuator is controlled using a fixed stiffness value, $K = 2.3 \ \si{\newton\meter\per\radian}$, allowing it to work as a SEA actuator.
The mean angle of the motors is then derived as $\theta_\textnormal{m}(t) = q(t) + \frac{1}{K} \tau(t)$, where $q$ is the actual joint position and $\tau$ is the torque computed using ILC.
The learning process comprises $100$ iterations.
Once again, RMSE is employed to evaluate the efficiency of the ILC controller.
Fig.~\ref{fig:ilchyb_RMSEoveriter} illustrates the evolution of the RMSE during the ILC learning phase for all example objects.
% The results of the generalized torques for the test objects compared to the example objects are shown in Fig.~\ref{fig:ilchyb_rmseoverobj}.
Fig.~\ref{fig:ilchyb_rmseoverobj} compares the generalized torques on the test objects with the ILC results on the example objects.

 \begin{figure}%[hb!]
    \centering
    \includegraphics[width=0.85\linewidth,trim={0 0.3cm 0 0},clip]{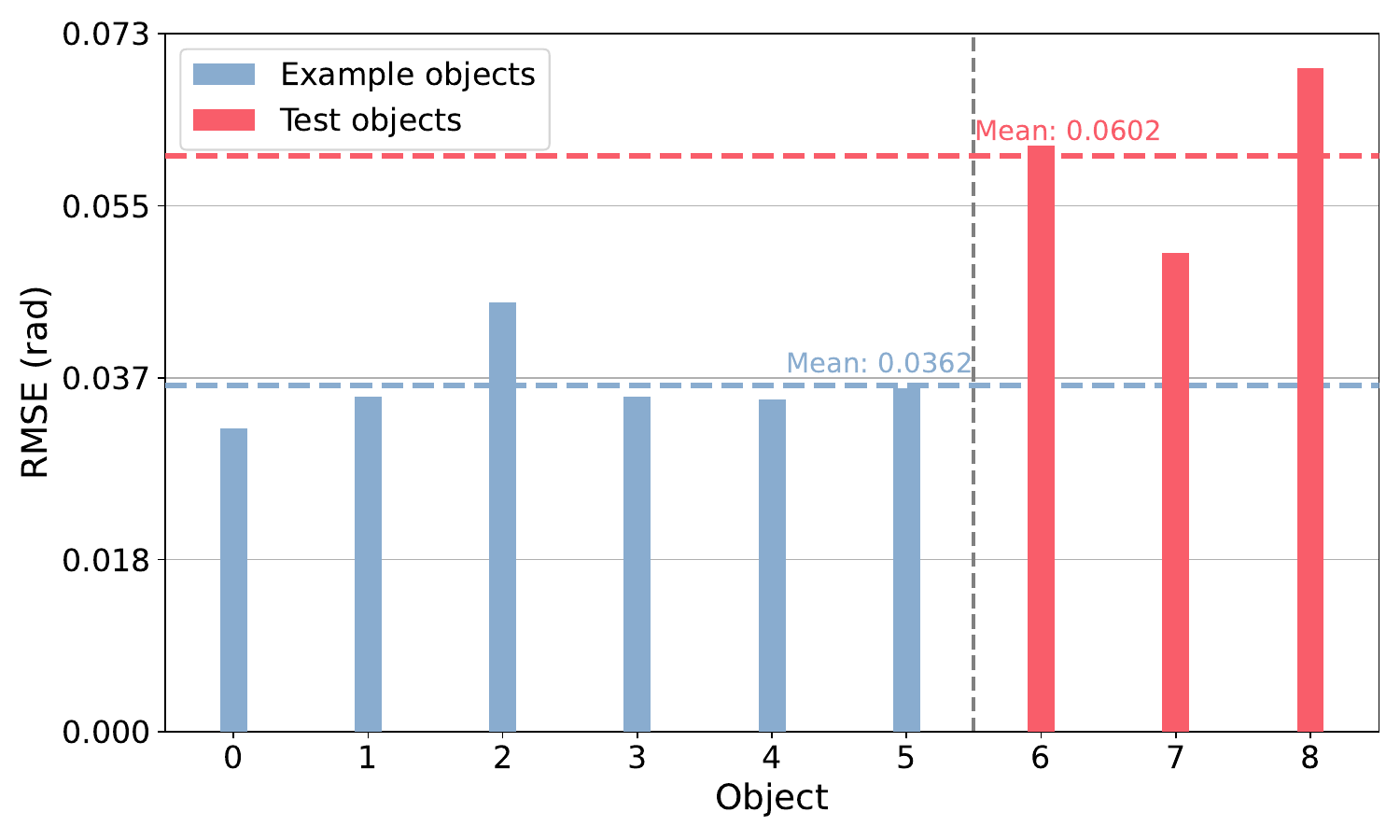}
    \caption{Experiment on Panda manipulator with elastic wrist: comparison of ILC-based and generalized torque performance.}
    \label{fig:ilchyb_rmseoverobj}
\end{figure}
\begin{figure}%[hb!]
    \centering
    \includegraphics[width=0.9\linewidth]{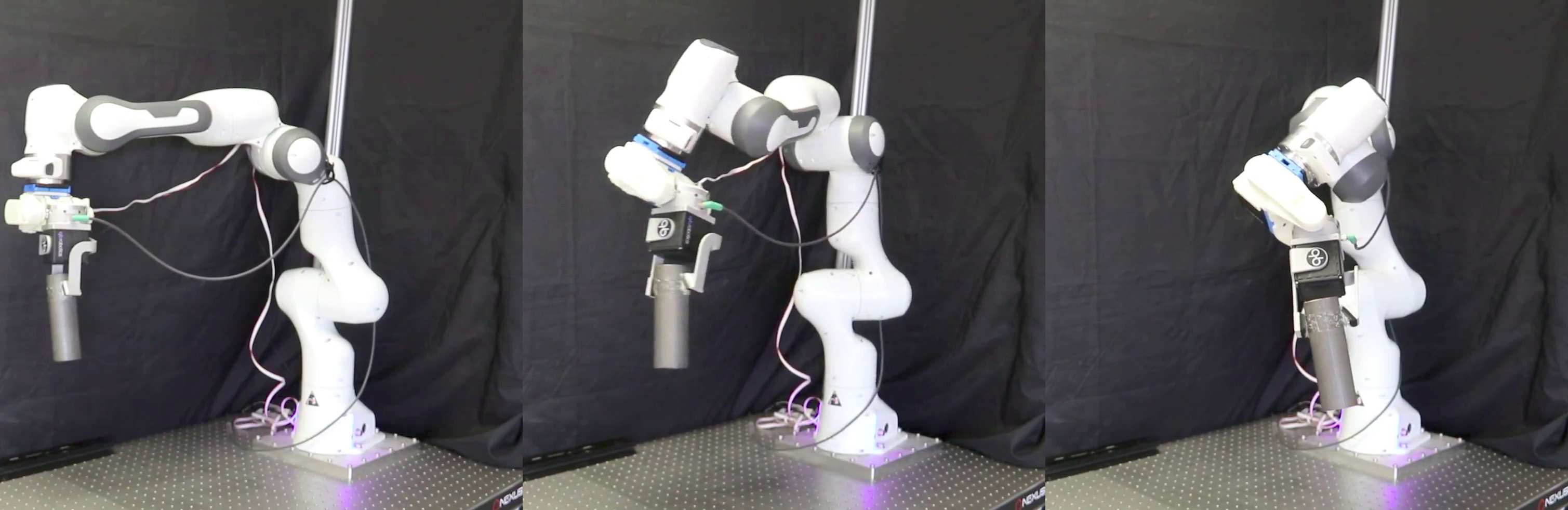}
    \caption{Photo sequence of the Panda robot with an elastic wrist tracking the desired trajectory while carrying Object 8, using the control action computed with the generalized torque method.}
    \label{fig:sequence}
\end{figure}

The tracking performance is highlighted in Fig.~\ref{fig:ilchyb_meas}, where the desired trajectory is compared with the actual trajectories obtained using both ILC and generalized torques across all system joints.
Finally, Fig.~\ref{fig:sequence} presents a sequence of images showing the robot tracking the desired trajectory with test Object 8.

\paragraph*{Discussion.}
In this final experiment, the error trend observed is similar to that obtained for the purely rigid system.
With respect to the ILC controller, Fig.~\ref{fig:ilchyb_RMSEoveriter} shows that the RMSE decreases over the iterations, although not monotonically, in contrast to the previous experiments.
Nevertheless, the controller achieves good trajectory tracking performance for all example objects after the learning phase.
\rev{Consistently} with the rigid system, the global RMSE computed for the example and test objects, as shown in Fig.~\ref{fig:ilchyb_rmseoverobj}, is higher for the generalized torques than for the learned torques\rev{, the mean rising from 0.0362 on the example objects to 0.0602 on the test objects}.
Notably, the difference between the average errors of the ILC and generalized torques is higher than in the rigid case.
\rev{The dynamics of this system is more complex than that of the rigid one, so that the ILC itself achieves a lower tracking accuracy, as reflected by its higher error on the example objects, i.e., $0.0362$ against the $0.0128$ of the rigid case. 
Being built on these example torques, the generalized torques inherit this larger reference error, which propagates as described by the sensitivity analysis in~\eqref{eq:sensanalysis} and yields the higher test error. 
The tracking error nevertheless remains bounded: even in this most demanding scenario, the mean RMSE does not exceed \SI{0.0602}{rad}, and, as confirmed by the joint trajectories in Fig.~\ref{fig:ilchyb_meas} 
and by the photo sequence in Fig.~\ref{fig:sequence}, the desired trajectory is followed for all joints and test objects, so that the generalization remains effective even under joint elasticity.}
%%%%%%%%%%%%%%%%%%%%%%%%%%%%%%%%%%%%%%%%%%%%%%%%%%%%%%%%%%%%%%%%%%%%%%%%%%%
%% Conclusion
%%%%%%%%%%%%%%%%%%%%%%%%%%%%%%%%%%%%%%%%%%%%%%%%%%%%%%%%%%%%%%%%%%%%%%%%%%%
\section{Conclusion}
\label{sec:conclusion}
This paper proposed a method for the generalization of feedforward control inputs, acquired with any controller, to track a desired trajectory with different payloads.
The method was designed for systems with rigid and elastic joints.
The algorithm is independent of the robot's model since it requires only the knowledge of the payload's dynamic parameters to generalize torques without needing to repeat the learning process.
The effectiveness of the method was validated through simulations and experiments across three systems: a 3-DoF soft-articulated manipulator, a rigid Franka Emika Panda, and a soft-rigid system (Panda with an elastic wrist). Simulations covered the first two, while physical experiments were conducted on the Panda-based configurations.
In all setups, the desired torques for the example set were derived using ILC, except for the simulation of the 3-DoF soft-articulated manipulator, where a computed torque controller was employed.
Then, the generalized torques were tested on new objects, demonstrating notable performance.
While the proposed method is promising, there are some limitations, such as handling changes in trajectory, for which the procedure has to be repeated.
Indeed, future work could extend the method to trajectories with different endpoints and to payloads whose dynamic parameters are not known accurately, such as objects with unknown or non-uniform mass distribution.

\printcredits

\section*{Declaration of competing interest}
The authors declare that they have no known competing financial interests or personal relationships that could have appeared to influence the work reported in this paper.

\section*{Supplementary material}
Supplementary material associated with this article can be found in the online version at \url{https://doi.org/10.1016/j.conengprac.2026.107248}.

%%%%%%%%%%%%%%%%%%%%%%%%%%%%%%%%%%%%%%%%%%%%%%%%%%%%%%%%%%%%%%%%%%%%%%%%%%%
%% Bibliography
%%%%%%%%%%%%%%%%%%%%%%%%%%%%%%%%%%%%%%%%%%%%%%%%%%%%%%%%%%%%%%%%%%%%%%%%%%%
\bibliographystyle{cas-model2-names}
\bibliography{Bibliography.bib}

@article{liu2024application,
  title={Application, development and future opportunities of collaborative robots (cobots) in manufacturing: A literature review},
  author={Liu, Li and Guo, Fu and Zou, Zishuai and Duffy, Vincent G},
  journal={International Journal of Human--Computer Interaction},
  volume={40},
  number={4},
  pages={915--932},
  year={2024},
  publisher={Taylor \& Francis},
  doi={10.1080/10447318.2022.2041907}
}

@article{Arimoto84,
author = {Arimoto, Suguru and Kawamura, Sadao and Miyazaki, Fumio},
title = {Bettering operation of Robots by learning},
journal = {Journal of Robotic Systems},
volume = {1},
number = {2},
pages = {123-140},
year = {1984},
doi = {10.1002/rob.4620010203}
}

@ARTICLE{Bristow_2006,
  author={Bristow, D.A. and Tharayil, M. and Alleyne, A.G.},
  journal={IEEE Control Systems Magazine}, 
  title={A survey of iterative learning control}, 
  year={2006},
  volume={26},
  number={3},
  pages={96-114},
  doi={10.1109/MCS.2006.1636313}
}

@ARTICLE{Tayebi_2008,
  author={Tayebi, A. and Abdul, S. and Zaremba, M. B. and Ye, Y.},
  journal={IEEE/ASME Transactions on Mechatronics}, 
  title={Robust Iterative Learning Control Design: Application to a Robot Manipulator}, 
  year={2008},
  volume={13},
  number={5},
  pages={608-613},
  doi={10.1109/TMECH.2008.2004627}
}

@ARTICLE{Saab_2022,
  author={Saab, Samer Said and Shen, Dong and Orabi, Mohamad and Kors, David and Jaafar, Rayana H.},
  journal={IEEE Transactions on Industrial Electronics}, 
  title={Iterative Learning Control: Practical Implementation and Automation}, 
  year={2022},
  volume={69},
  number={2},
  pages={1858-1866},
  doi={10.1109/TIE.2021.3063866}
}

@inproceedings{de2000feedforward,
  title={Feedforward/feedback laws for the control of flexible robots},
  author={De Luca, Alessandro},
  booktitle={Proceedings 2000 ICRA. Millennium Conference. IEEE International Conference on Robotics and Automation. Symposia Proceedings (Cat. No. 00CH37065)},
  volume={1},
  pages={233--240},
  year={2000},
  organization={IEEE},
  doi={10.1109/ROBOT.2000.844064}
}

@article{chhatoi2023optimal,
  title={Optimal control for articulated soft robots},
  author={Chhatoi, Saroj Prasad and Pierallini, Michele and Angelini, Franco and Mastalli, Carlos and Garabini, Manolo},
  journal={IEEE Transactions on Robotics},
  volume={39},
  number={5},
  pages={3671--3685},
  year={2023},
  publisher={IEEE},
  doi={10.1109/TRO.2023.3288837}
}

@article{della2020soft,
  title={Soft robots},
  author={Della Santina, Cosimo and Catalano, Manuel G and Bicchi, Antonio and Ang, M and Khatib, O and Siciliano, B},
  journal={Encyclopedia of Robotics},
  volume={489},
  year={2020},
  publisher={Springer}
}

@article{albu2008soft,
  title={Soft robotics},
  author={Albu-Schaffer, Alin and Eiberger, Oliver and Grebenstein, Markus and Haddadin, Sami and Ott, Christian and Wimbock, Thomas and Wolf, Sebastian and Hirzinger, Gerd},
  journal={IEEE Robotics \& Automation Magazine},
  volume={15},
  number={3},
  pages={20--30},
  year={2008},
  publisher={IEEE},
  doi={10.1109/MRA.2008.927979}
}

@article{liu2019industrial,
  title={Industrial feedforward control technology: a review},
  author={Liu, Lu and Tian, Siyuan and Xue, Dingyu and Zhang, Tao and Chen, YangQuan},
  journal={Journal of Intelligent Manufacturing},
  volume={30},
  number={8},
  pages={2819--2833},
  year={2019},
  publisher={Springer},
  doi={10.1007/s10845-018-1399-6}
}

@article{li2022five,
  title={Five-axis contour error control based on spatial iterative learning},
  author={Li, Jiangang and You, Zhiyang and Li, Yanan and Miao, Enming and Yue, Ruijie},
  journal={IEEE Transactions on Automation Science and Engineering},
  volume={20},
  number={1},
  pages={112--123},
  year={2022},
  publisher={IEEE},
  doi={10.1109/TASE.2022.3142565}
}

@article{chen2024data,
  title={Data-driven methods applied to soft robot modeling and control: A review},
  author={Chen, Zixi and Renda, Federico and Le Gall, Alexia and Mocellin, Lorenzo and Bernabei, Matteo and Dangel, Th{\'e}o and Ciuti, Gastone and Cianchetti, Matteo and Stefanini, Cesare},
  journal={IEEE Transactions on Automation Science and Engineering},
  volume={22},
  pages={2241--2256},
  year={2025},
  publisher={IEEE},
  doi={10.1109/TASE.2024.3377291}
}

@article{pagliarani2024variable,
  title={Variable stiffness structure inspired by seashells},
  author={Pagliarani, Niccolo and Arleo, Luca and De Luca, Giovanna and Pozzi, Jasmine and Cianchetti, Matteo},
  journal={Smart Materials and Structures},
  volume={33},
  number={2},
  pages={025004},
  year={2024},
  publisher={IOP Publishing},
  doi={10.1088/1361-665X/ad1b27}
}

@article{pierallini2022swing,
  title={Swing-up of underactuated compliant arms via iterative learning control},
  author={Pierallini, Michele and Angelini, Franco and Bicchi, Antonio and Garabini, Manolo},
  journal={IEEE Robotics and Automation Letters},
  volume={7},
  number={2},
  pages={3186--3193},
  year={2022},
  publisher={IEEE},
  doi={10.1109/LRA.2022.3144786}
}

@article{pierallini2023iterative,
  title={Iterative learning control for compliant underactuated arms},
  author={Pierallini, Michele and Angelini, Franco and Mengacci, Riccardo and Palleschi, Alessandro and Bicchi, Antonio and Garabini, Manolo},
  journal={IEEE Transactions on Systems, Man, and Cybernetics: Systems},
  volume={53},
  number={6},
  pages={3810--3822},
  year={2023},
  publisher={IEEE},
  doi={10.1109/TSMC.2023.3234403}
}

@article{Mengacci_2021,
author = {Riccardo Mengacci and Franco Angelini and Manuel G Catalano and Giorgio Grioli and Antonio Bicchi and Manolo Garabini},
title ={On the motion/stiffness decoupling property of articulated soft robots with application to model-free torque iterative learning control},
journal = {The International Journal of Robotics Research},
volume = {40},
number = {1},
pages = {348-374},
year = {2021},
doi = {10.1177/0278364920943275}
}

@article{mengacci2021overcoming,
  title={Overcoming the torque/stiffness range tradeoff in antagonistic variable stiffness actuators},
  author={Mengacci, Riccardo and Garabini, Manolo and Grioli, Giorgio and Catalano, Manuel G and Bicchi, Antonio},
  journal={IEEE/ASME Transactions on Mechatronics},
  volume={26},
  number={6},
  pages={3186--3197},
  year={2021},
  publisher={IEEE},
  doi={10.1109/TMECH.2021.3055364}
}

@ARTICLE{Angelini_2020,
  author={Angelini, Franco and Mengacci, Riccardo and Santina, Cosimo Della and Catalano, Manuel G. and Garabini, Manolo and Bicchi, Antonio and Grioli, Giorgio},
  journal={IEEE Robotics and Automation Letters}, 
  title={Time Generalization of Trajectories Learned on Articulated Soft Robots}, 
  year={2020},
  volume={5},
  number={2},
  pages={3493-3500},
  doi={10.1109/LRA.2020.2977268}
}

@ARTICLE{Angelini_2020_trajgen,
AUTHOR={Angelini, Franco  and Della Santina, Cosimo  and Garabini, Manolo  and Bianchi, Matteo  and Bicchi, Antonio },
TITLE={Control Architecture for Human-Like Motion With Applications to Articulated Soft Robots},
JOURNAL={Frontiers in Robotics and AI},
VOLUME={7},
YEAR={2020},
ISSN={2296-9144},
ARTICLE-NUMBER={117},
DOI={10.3389/frobt.2020.00117},
}

@ARTICLE{Angelini_2020_softhandler,
  author={Angelini, Franco and Petrocelli, Cristiano and Catalano, Manuel G. and Garabini, Manolo and Grioli, Giorgio and Bicchi, Antonio},
  journal={IEEE Robotics \& Automation Magazine}, 
  title={SoftHandler: An Integrated Soft Robotic System for Handling Heterogeneous Objects}, 
  year={2020},
  volume={27},
  number={3},
  pages={55-72},
  doi={10.1109/MRA.2019.2955952}
}

@Inbook{Siciliano,
author="Siciliano, Bruno and Sciavicco, Lorenzo and Villani, Luigi and Oriolo, Giuseppe",
title="Dynamics",
bookTitle="Robotics: Modelling, Planning and Control",
year="2009",
publisher="Springer London",
address="London",
pages="247--302",
isbn="978-1-84628-642-1",
doi="10.1007/978-1-84628-642-1_7",
url="https://doi.org/10.1007/978-1-84628-642-1_7"
}

@INPROCEEDINGS{DeLuca2002,
  author={De Luca, A. and Farina, R.},
  booktitle={Proceedings 2002 IEEE International Conference on Robotics and Automation (Cat. No.02CH37292)}, 
  title={Dynamic scaling of trajectories for robots with elastic joints}, 
  year={2002},
  volume={3},
  number={},
  pages={2436-2442 vol.3},
  doi={10.1109/ROBOT.2002.1013597}}

@article{chen2020generalized,
  title={Generalized iterative learning control with mixed system constraints: A gantry robot based verification},
  author={Chen, Yiyang and Chu, Bing and Freeman, Christopher T and Liu, Yanhong},
  journal={Control Engineering Practice},
  volume={95},
  pages={104260},
  year={2020},
  publisher={Elsevier},
  doi={10.1016/j.conengprac.2019.104260}
}

@article{freeman2010iterative,
  title={Iterative learning control for multiple point-to-point tracking application},
  author={Freeman, Chris T and Cai, Zhonglun and Rogers, Eric and Lewin, Paul L},
  journal={IEEE Transactions on Control Systems Technology},
  volume={19},
  number={3},
  pages={590--600},
  year={2010},
  publisher={IEEE},
  doi={10.1109/TCST.2010.2051670}
}

@article{pierallini2025fishing,
  title={Fishing for Data: Modeling, Optimal Planning, and Iterative Learning Control for Flexible Link Robots},
  author={Pierallini, Michele and Gopanunni, Ramesh Krishnan Muttathil and Angelini, Franco and Bicchi, Antonio and Garabini, Manolo},
  journal={IEEE Transactions on Control Systems Technology},
  year={2025},
  publisher={IEEE},
  doi={10.1109/TCST.2025.3550030}
}

@article{palleschi2021fast,
  title={Fast and safe trajectory planning: Solving the cobot performance/safety trade-off in human-robot shared environments},
  author={Palleschi, Alessandro and Hamad, Mazin and Abdolshah, Saeed and Garabini, Manolo and Haddadin, Sami and Pallottino, Lucia},
  journal={IEEE Robotics and Automation Letters},
  volume={6},
  number={3},
  pages={5445--5452},
  year={2021},
  publisher={IEEE},
  doi={10.1109/LRA.2021.3076968}
}

@article{zeng2020tossingbot,
  title={Tossingbot: Learning to throw arbitrary objects with residual physics},
  author={Zeng, Andy and Song, Shuran and Lee, Johnny and Rodriguez, Alberto and Funkhouser, Thomas},
  journal={IEEE Transactions on Robotics},
  volume={36},
  number={4},
  pages={1307--1319},
  year={2020},
  publisher={IEEE},
  doi={10.1109/TRO.2020.2988642}
}

@article{gabellieri2022autonomous, 
  title={Autonomous unwrapping of general pallets: a novel robot for logistics exploiting contact-based planning},
  author={Gabellieri, Chiara and Palleschi, Alessandro and Pallottino, Lucia and Garabini, Manolo},
  journal={IEEE Transactions on Automation Science and Engineering},
  volume={20},
  number={2},
  pages={1194--1211},
  year={2022},
  publisher={IEEE},
  doi={10.1109/TASE.2022.3182362}
}

@article{surati2021pick,
  title={Pick and place robotic arm: a review paper},
  author={Surati, Sharath and Hedaoo, Shaunak and Rotti, Tushar and Ahuja, Vaibhav and Patel, Nishigandha},
  journal={Int. Res. J. Eng. Technol},
  volume={8},
  number={2},
  pages={2121--2129},
  year={2021}
}

@article{lobbezoo2021reinforcement,
  title={Reinforcement learning for pick and place operations in robotics: A survey},
  author={Lobbezoo, Andrew and Qian, Yanjun and Kwon, Hyock-Ju},
  journal={Robotics},
  volume={10},
  number={3},
  pages={105},
  year={2021},
  publisher={MDPI},
  doi={10.3390/robotics10030105}
}

@Inbook{Albu-Schäffer2016,
author="Albu-Sch{\"a}ffer, Alin
and Bicchi, Antonio",
editor="Siciliano, Bruno
and Khatib, Oussama",
title="Actuators for Soft Robotics",
bookTitle="Springer Handbook of Robotics",
year="2016",
publisher="Springer International Publishing",
address="Cham",
pages="499--530",
isbn="978-3-319-32552-1",
doi="10.1007/978-3-319-32552-1_21",
url="https://doi.org/10.1007/978-3-319-32552-1_21"
}

@article{Spong_1987,
    author = {Spong, M. W.},
    title = {Modeling and Control of Elastic Joint Robots},
    journal = {Journal of Dynamic Systems, Measurement, and Control},
    volume = {109},
    number = {4},
    pages = {310-318},
    year = {1987},
    month = {12},
    issn = {0022-0434},
    doi = {10.1115/1.3143860},
    url = {https://doi.org/10.1115/1.3143860},
}

@ARTICLE{adaptiveCT_simonini_2025,
  author={Simonini, Giorgio and Baracca, Marco and Cavaliere, Tommaso V. and Bicchi, Antonio and Salaris, Paolo},
  journal={IEEE Access}, 
  title={A Novel Formulation for Adaptive Computed Torque Control Enabling Low Feedback Gains in Highly Dynamical Tasks}, 
  year={2025},
  volume={13},
  number={},
  pages={69898-69909},
  doi={10.1109/ACCESS.2025.3561635}
}

@INPROCEEDINGS{middletone_adaptivect_1986,
  author={Middletone, R. H. and Goodwin, G. C.},
  booktitle={1986 25th IEEE Conference on Decision and Control}, 
  title={Adaptive computed torque control for rigid link manipulators}, 
  year={1986},
  volume={},
  number={},
  pages={68-73},
  doi={10.1109/CDC.1986.267156}}

@Article{thunderdyn_lib_2025,
AUTHOR = {Baracca, Marco and Simonini, Giorgio and Tolomei, Simone and De Santis, Yuri and Rosa Brusin, Paolo and Angeli, Stefano and Gabiccini, Marco and Bicchi, Antonio and Salaris, Paolo},
TITLE = {Thunder Dynamics: A C++ Tool for Adaptive Control of Serial Manipulators},
JOURNAL = {Robotics},
VOLUME = {14},
YEAR = {2025},
NUMBER = {9},
ARTICLE-NUMBER = {126},
ISSN = {2218-6581},
DOI = {10.3390/robotics14090126}
}

@book{golub1996van,
  author    = {Golub, Gene H. and Van Loan, Charles F.},
  title     = {Matrix Computations},
  edition   = {4th},
  year      = {2013},
  publisher = {Johns Hopkins University Press},
  address   = {Baltimore, MD, USA},
  isbn      = {978-1-4214-0794-4}
}

@incollection{Hollerbach2016,
  author    = {Hollerbach, John and Khalil, Wisama and Gautier, Maxime},
  title     = {Model Identification},
  booktitle = {Springer Handbook of Robotics},
  edition   = {2nd},
  editor    = {Siciliano, Bruno and Khatib, Oussama},
  chapter   = {6},
  pages     = {113--138},
  publisher = {Springer International Publishing},
  address   = {Cham},
  year      = {2016},
  doi       = {10.1007/978-3-319-32552-1_6}
}

@article{Atkeson1986,
  author  = {Atkeson, Christopher G. and An, Chae H. and Hollerbach, John M.},
  title   = {Estimation of Inertial Parameters of Manipulator Loads and Links},
  journal = {The International Journal of Robotics Research},
  volume  = {5},
  number  = {3},
  pages   = {101--119},
  year    = {1986},
  doi     = {10.1177/027836498600500306}
}

@article{SlotineLi1987,
  author  = {Slotine, Jean-Jacques E. and Li, Weiping},
  title   = {On the Adaptive Control of Robot Manipulators},
  journal = {The International Journal of Robotics Research},
  volume  = {6},
  number  = {3},
  pages   = {49--59},
  year    = {1987},
  doi     = {10.1177/027836498700600303}
}

@article{Arteaga2024,
  author  = {Arteaga, Marco A.},
  title   = {On the Exact Parameter Estimation for Robot Manipulators Without Persistence of Excitation},
  journal = {IEEE Transactions on Automatic Control},
  volume  = {69},
  number  = {1},
  pages   = {410--417},
  year    = {2024},
  doi     = {10.1109/TAC.2023.3269249}
}

@Inbook{Roman1992,
author="Roman, Steven",
title="Affine Geometry",
bookTitle="Advanced Linear Algebra",
year="1992",
publisher="Springer New York",
address="New York, NY",
pages="315--328",
isbn="978-1-4757-2178-2",
doi="10.1007/978-1-4757-2178-2_16",
url="https://doi.org/10.1007/978-1-4757-2178-2_16"
}

    %GP
    \bio{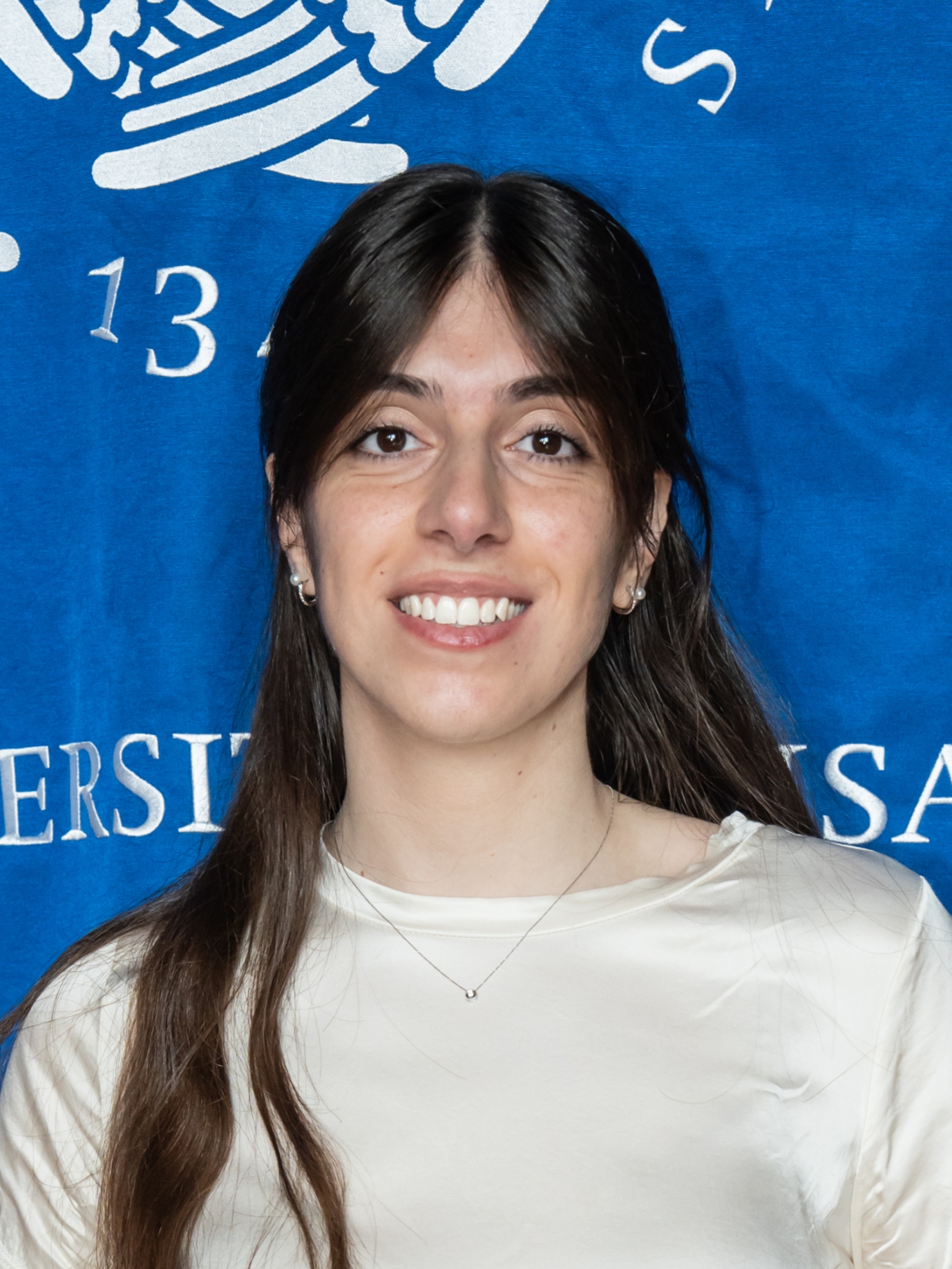}Giorgia Pazzi is a PhD student in Information Engineering (Robotics) at the University of Pisa, Italy. She received her B.Sc. in Computer and Electronic Engineering and her M.Sc. in Robotics and Automation Engineering. Her research focuses on learning-based control and generalization of control actions for feedforward controllers, along with modeling and control of soft robotic systems.
    Her research is conducted on several robotic platforms, including soft articulated manipulators, quadruped robots, and robotic legs for the testing of lower-limb exoskeletons.

    \endbio
    %GS
    \bio{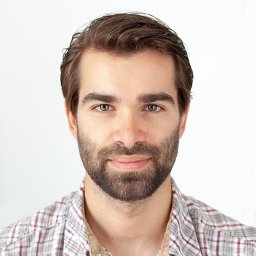}Giorgio Simonini received the bachelor’s degree in electronics, the master’s degree in robotics engineering, and the Ph.D. degree in Robotics and Intelligent Machines from the University of Pisa, as part of the national doctoral program DRIM.
    His research interests include efficient planning and control techniques for both traditional and elastic-joint robots. A significant part of his work involves adaptive control and object manipulation, with a particular focus on object throwing using robotic manipulators.
    \endbio
    %MP
    \bio{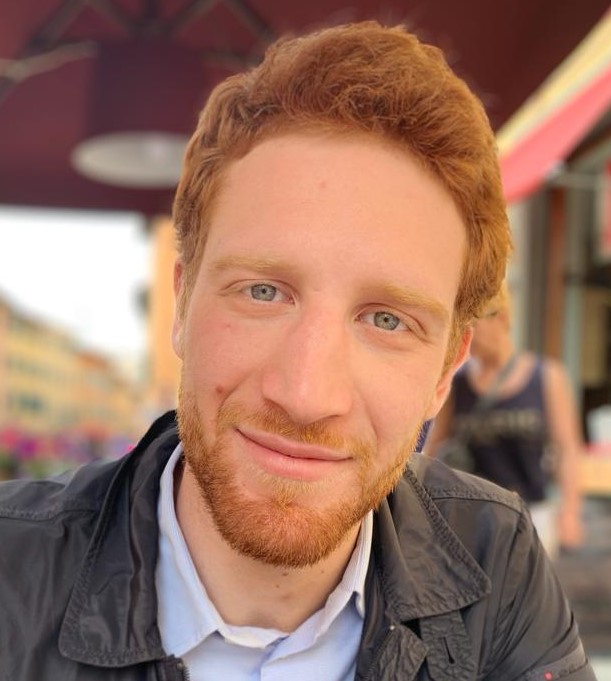}
    Michele Pierallini received a B.S. degree in biomedical engineering in 2017,  M.S. degree (cum laude) in automation and robotics engineering in 2020, and the Ph.D. degree (cum laude) in robotics in 2024 from the University of Pisa, Pisa, Italy, where he is currently working at the Research Center “Enrico Piaggio.” His current research focuses on the control of soft robotic systems and iterative learning control.
    \endbio
    %PS
    \bio{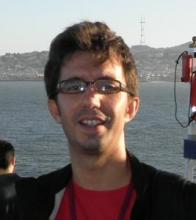}
    PAOLO SALARIS (Member, IEEE) received the M.Sc. degree in electrical engineering and the Ph.D. degree in automation and bioengineering from the University of Pisa, in 2007 and 2011, respectively. He was a Visiting Scholar with the Beckman Institute for Advanced Science and Technology, University of Illinois Urbana–Champaign, in 2009. He was a Postdoctoral Researcher with the Research Center ‘‘E. Piaggio,’’ from March 2011 to January 2014, and LAAS-CNRS, Toulouse, from February 2014 to July 2015. From 2015 to 2019, he was a Permanent Researcher (CRCN) with INRIA Sophia Antipolis Mediterranee, and from 2019 to 2022, he was an Assistant Professor with the University of Pisa, where he became an Associate Professor, in 2022. His research interests include optimal motion planning and control, active sensing, optimal sensor placements, optimal estimation, multi-robot, and nonholonomic systems.
    \endbio
    %GG
    \bio{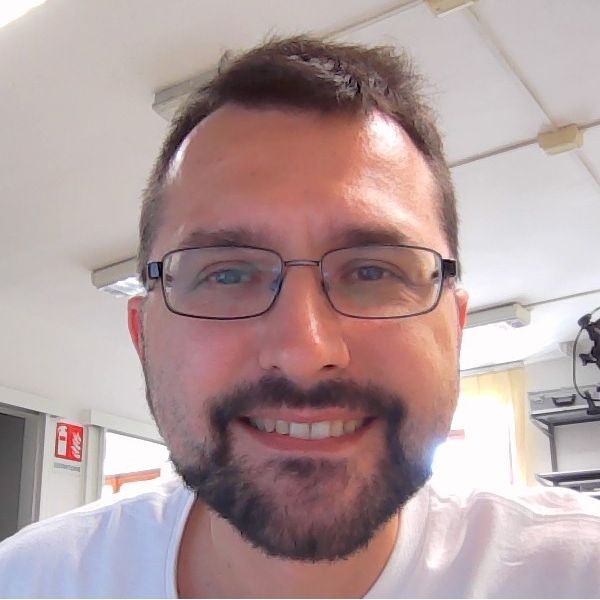}
    Giorgio Grioli received the Ph.D. degree in Robotics, Automation and Bioengineering in 2011 from the University of Pisa, Italy. He is currently a Senior Researcher with the Department of Information Engineering and the E. Piaggio Research Center, University of Pisa, and Visiting Scientist at Italian Institute of Technology. His research focuses on soft robotics, robotic and prosthetic hands, variable impedance actuation, haptics, and physical human–robot interaction. He has contributed to the development of several widely adopted robotic and prosthetic systems, authored more than 150 scientific publications, holds multiple international patents, and co-founded two robotics spin-off companies qbrobotics and Exsensia. 
    \endbio
    %MG
    \bio{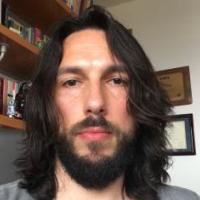}
    Manolo Garabini graduated in Mechanical Engineering and received the Ph.D. in Robotics from the University of Pisa where he is currently employed as Associate Professor. He contributed to the realization of modular Variable Stiffness Actuators. He coordinated the H2020 EU Research Project Natural Intelligence. His main research interests include the design, planning, and control of soft adaptive robots. 
    \endbio
    %FA
    \bio{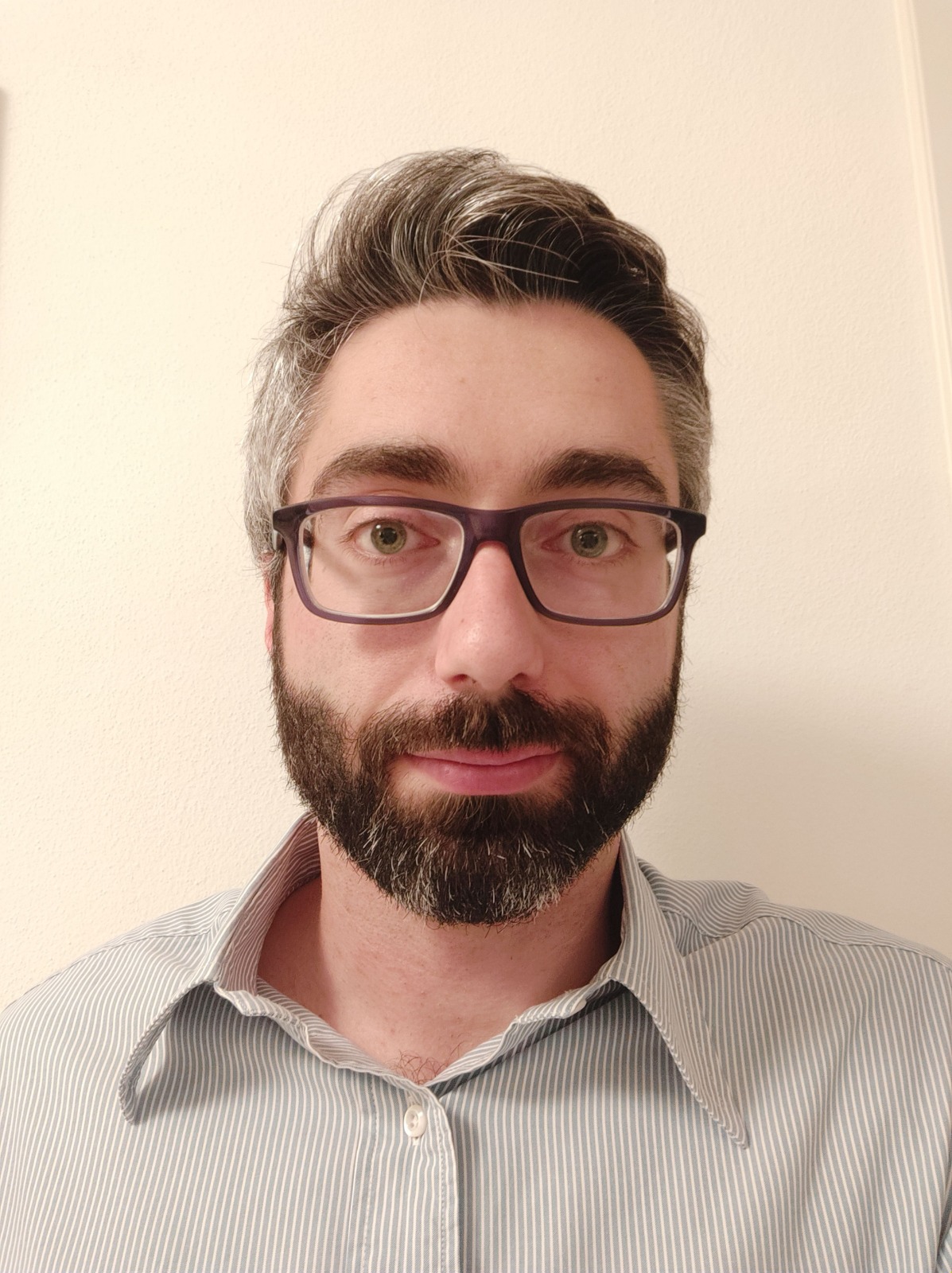}
    Franco Angelini received the B.S. degree in computer engineering in 2013, the M.S. degree in automation and robotics engineering in 2016, and the Ph.D. degree in robotics in 2020 from the University of Pisa, Pisa, Italy. Franco is currently an Assistant Professor at the University of Pisa. His main research interests are control of soft robotic systems, impedance planning, and robotic environmental monitoring.
    \endbio

\end{document}